\theoremstyle{definition}
\newtheorem{theorem}{Theorem}
\newtheorem{definition}{Definition}
\newtheorem{remark}{Remark}
\newtheorem{proposition}{Proposition}
\newtheorem{lemma}{Lemma}
\long\def\acks#1{\vskip 0.3in\noindent{\large\bf Acknowledgments and Disclosure of Funding}\vskip 0.2in
\noindent #1}
\newcommand{\thediag}{\partial \Omega}
\newcommand{\projdiag}{p_{\thediag}}
\newcommand{\KL}{\mathrm{KL}}
\newcommand{\FG}{\mathrm{FG}}
\newcommand{\SFG}{\mathrm{SFG}}
\newcommand{\pers}{\mathrm{pers}}
\newcommand{\RR}{\mathbb{R}}
\newcommand{\DD}{\mathcal{D}}
\newcommand{\spt}{\mathrm{supp}}
\newcommand{\entropicregparam}{\epsilon}
\newcommand{\ntv}{\texttt{Node2vec}}
\newcommand{\vrparameter}{\rho}
\begin{document}

\title[Topological Node2vec]{Topological Node2vec: Enhanced Graph Embedding via Persistent Homology}

\author{Yasuaki Hiraoka}
\address{\vspace{-0.5cm}Institute for the Advanced Study of Human Biology\\ Kyoto University Institute for Advanced Study\\ Kyoto University, Kyoto 606-8501, Japan \\ \emph{\texttt{hiraoka.yasuaki.6z@kyoto-u.ac.jp}}}

\author{Yusuke Imoto}
\address{\vspace{-0.5cm}Institute for the Advanced Study of Human Biology\\ Kyoto University Institute for Advanced Study\\ Kyoto University, Kyoto 606-8501, Japan \\ \emph{\texttt{imoto.yusuke.4e@kyoto-u.ac.jp}}}

\author{Théo Lacombe}
\address{\vspace{-0.5cm}Laboratoire d’Informatique Gaspard Monge,\\ Univ. Gustave Eiffel, CNRS, LIGM, F-77454 \\ Marne-la-Vallée, France \\ \emph{\texttt{theo.lacombe@univ-eiffel.fr}}}

\author{Killian Meehan}
\address{\vspace{-0.5cm}Institute for the Advanced Study of Human Biology\\ Kyoto University Institute for Advanced Study\\ Kyoto University, Kyoto 606-8501, Japan \\ \emph{\texttt{meehan.killianfrancis.8m@kyoto-u.ac.jp}}}

\author{Toshiaki Yachimura}
\address{\vspace{-0.5cm}Mathematical Science Center for Co-creative Society,\\ Tohoku University, \\Sendai 980-0845, Japan \\ \emph{\texttt{toshiaki.yachimura.a4@tohoku.ac.jp}}}


\begin{abstract}
\ntv{} is a graph embedding method that learns a vector representation for each node of a weighted graph while seeking to preserve relative proximity and global structure.
Numerical experiments suggest \ntv{} struggles to recreate the topology of the input graph.
To resolve this we introduce a \emph{topological loss term} to be added to the training loss of \ntv{} which tries to align the \emph{persistence diagram} (PD) of the resulting embedding as closely as possible to that of the input graph.
Following results in computational optimal transport, we carefully adapt \emph{entropic regularization} to PD metrics, allowing us to measure the discrepancy between PDs in a differentiable way.
Our modified loss function can then be minimized through gradient descent to reconstruct both the geometry and the topology of the input graph. 
We showcase the benefits of this approach using demonstrative synthetic examples.
\end{abstract}

\maketitle


\section{Introduction}
Various data types, such as bodies of text or weighted graphs, do not come equipped with a natural linear structure, complicating or outright preventing the use of most machine learning techniques that typically require Euclidean data as input. 
A natural workaround for this is to find a way to represent such data as sets of points in some Euclidean space $\mathbb{R}^m$. 
Regarding bodies of text, any unique word appears throughout a text with its own frequency and propensity for having certain neighbors or forming certain grammatical structures. This information can be used to assign each word some $m$-dimensional point in a way such that relative proximities of all these points maximally respect the neighborhood and structural data of the text. This is precisely the point of the \texttt{Word2vec} model \citep{word2vec}.

\ntv{} \citep{node2vec} and its predecessor \texttt{Deepwalk} \citep{deepwalk} learn representations of all the nodes in a weighted graph as Euclidean points in a fixed dimension. These are essentially \texttt{Word2vec} models in which each node of the input graph is given context in a `sentence' by taking random walks starting from that node. This random-walk-generated training data is then handled in precisely the same way \texttt{Word2vec} would handle words surrounded by sentences.
This construction is easy to alter and expand upon, allowing it to appear in complex projects like the one to embed new multi-contact (hypergraph) cell data, as seen in the general work \citep{hypersagnn} and then specialized to biological purposes in the follow-up paper \citep{matcha}.

While representation learning models prove useful for revitalizing existing analyses and opening up new insights, it is important to understand the extent of the data loss under such a transformation. In this paper, we identify an area of stark failure on the part of \ntv{} (or more generally, random-walk-based graph embeddings) to retain certain graph properties and so reintroduce the ability to resolve such features via the inclusion of a new loss function. This failure point and the loss function we introduce to compensate for it are both \emph{topological} in nature. 

\begin{figure}[h]
\captionsetup[subfigure]{justification=centering}
     \centering
     \begin{subfigure}[t]{0.17\textwidth}
         \centering
         \includegraphics[width=\textwidth]{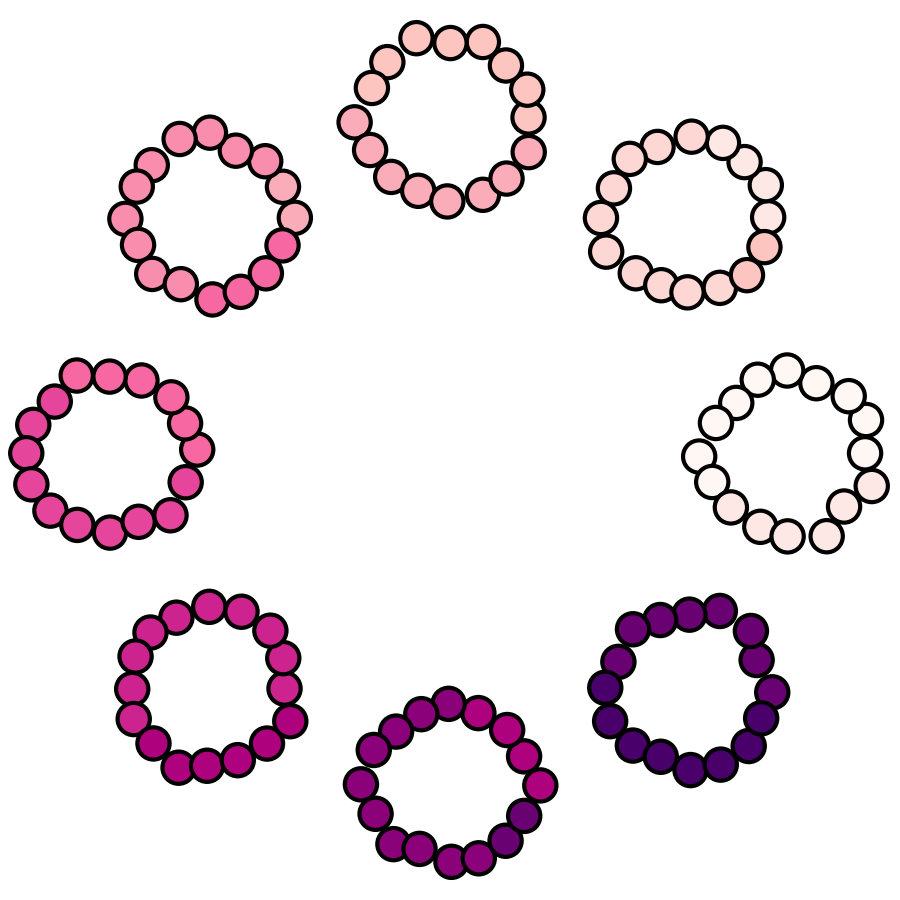}
         \caption{}
     \end{subfigure}
     \hfill
     \begin{subfigure}[t]{0.17\textwidth}
         \centering
         \includegraphics[width=\textwidth]{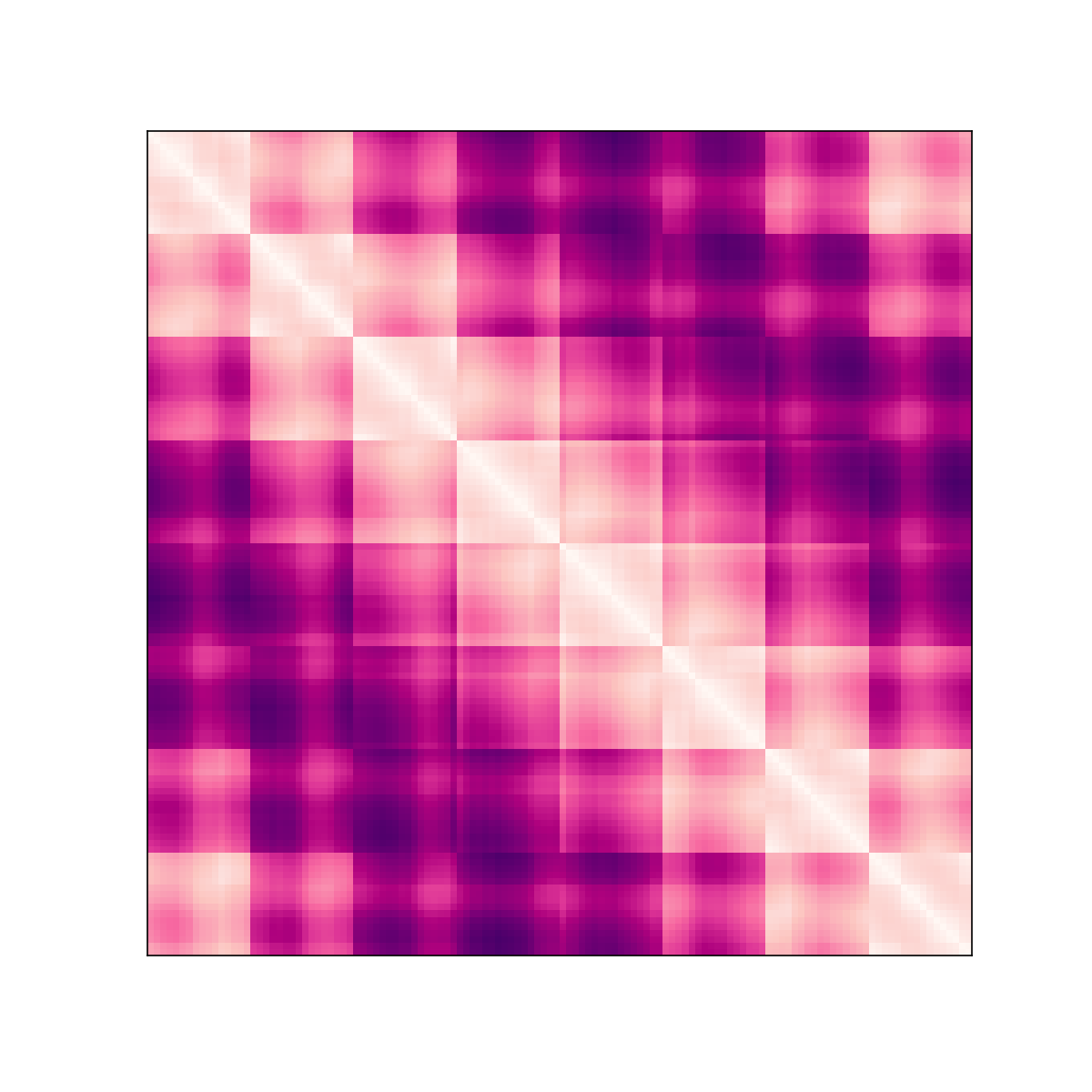}
         \caption{}
     \end{subfigure}
     \hfill
     \begin{subfigure}[t]{0.17\textwidth}
         \centering
         \includegraphics[width=\textwidth]{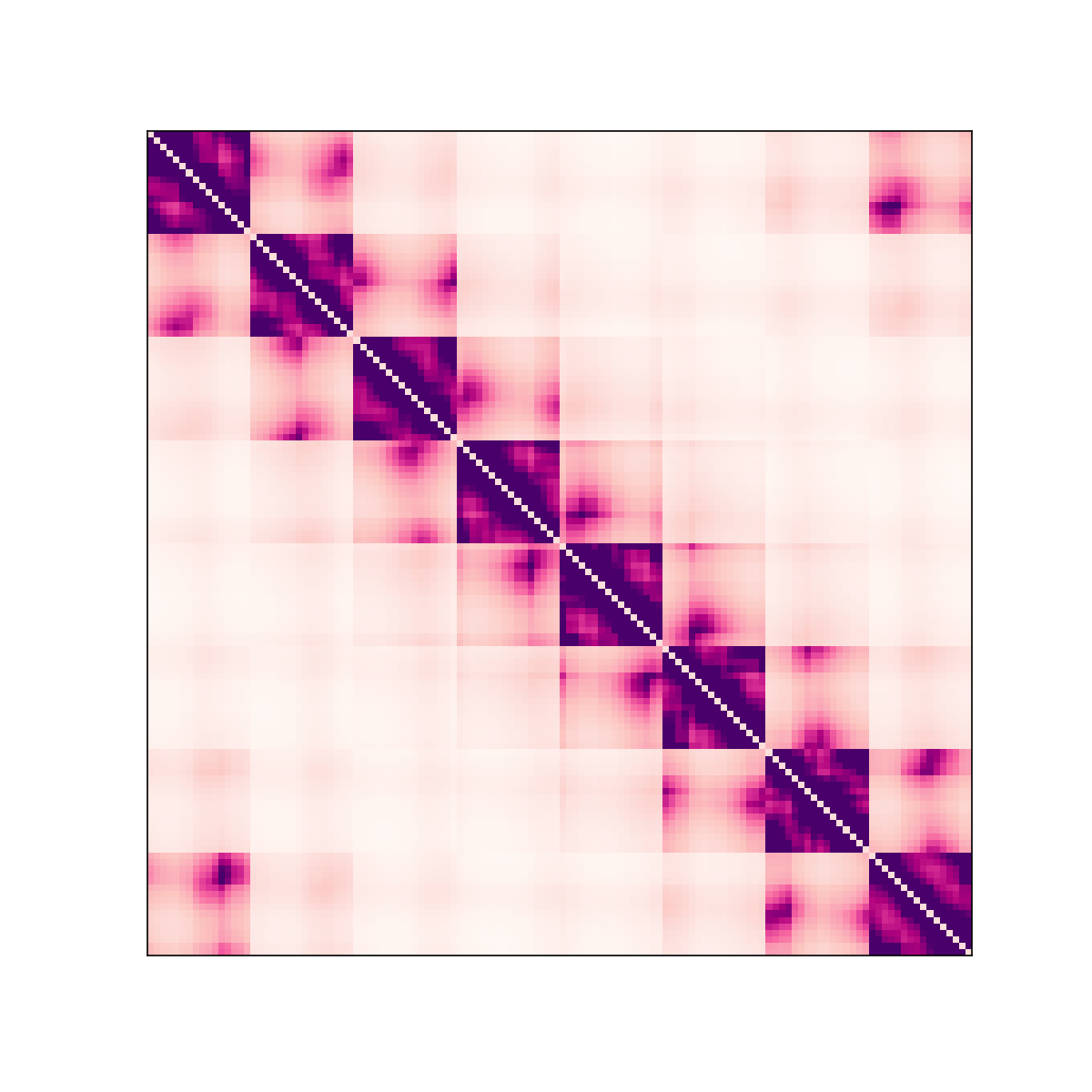}
         \caption{}
     \end{subfigure}
     \hfill
     \begin{subfigure}[t]{0.17\textwidth}
         \centering
         \includegraphics[width=\textwidth]{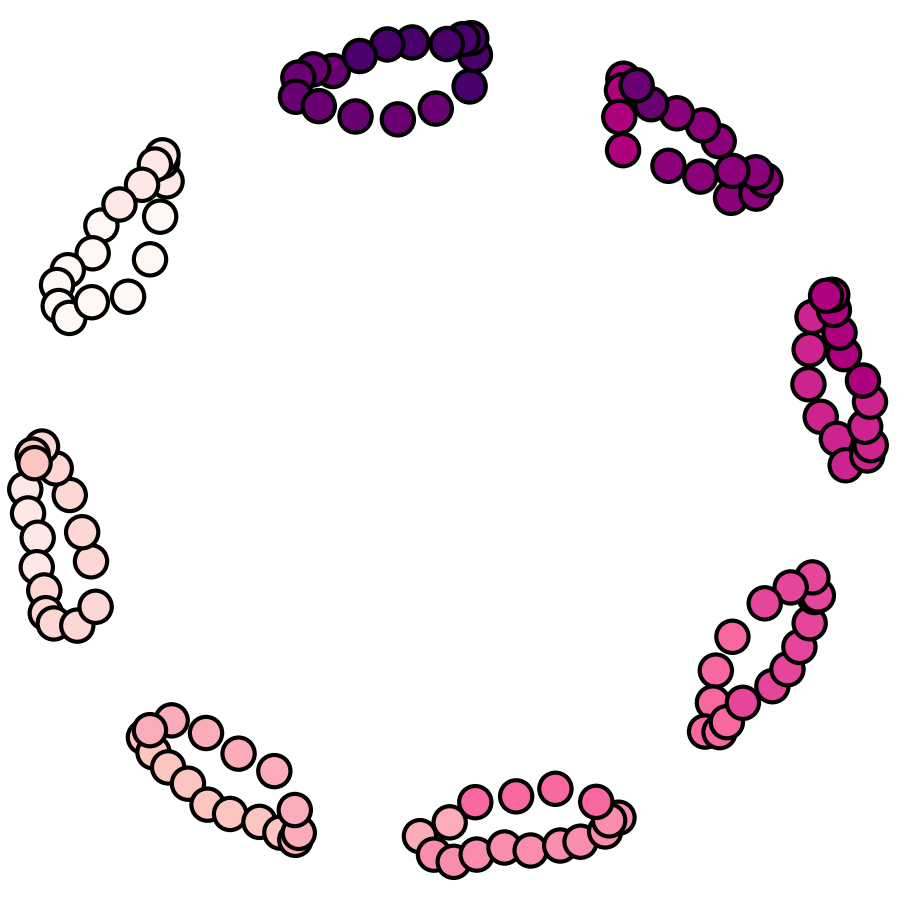}
         \caption{}
     \end{subfigure}
     \hfill
     \begin{subfigure}[t]{0.17\textwidth}
         \centering
         \includegraphics[width=\textwidth]{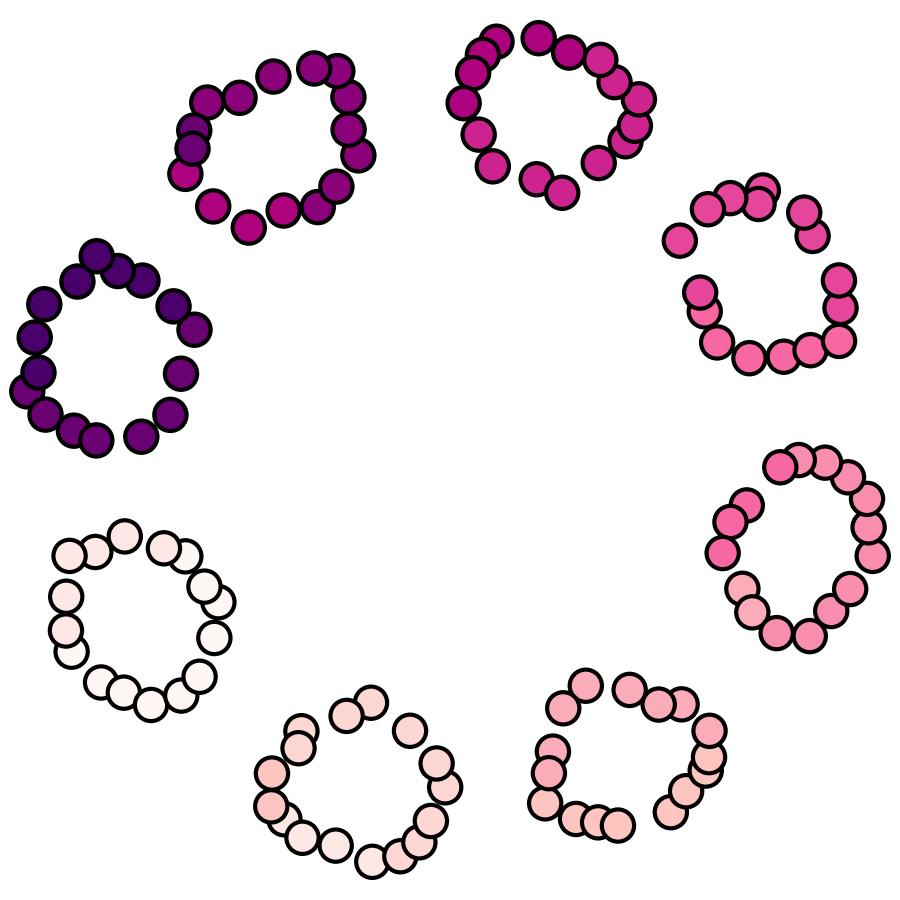}
         \caption{}
     \end{subfigure}
    \caption{Illustration of \ntv{} behavior with and without the incorporation of our topological loss during training. (a) An initial point cloud  and (b) its corresponding pairwise distance matrix. (c) The weighted adjacency matrix obtained by inverting the pairwise distances. This is the graph used as input for \ntv{} in this experiment. (d) The embedding proposed by \ntv{} after training using only the standard reconstruction loss. It fails to properly retrieve the eight smaller cycles appearing in the input graph, and the emergent central cycle is far too large. (e) The embedding proposed by \ntv{} after training while including our new topological loss term, in which the eight smaller cycles are recovered and the central cycle has been kept to a proper size.}
    \label{fig_start}
\end{figure}

\begin{figure}[h]
\captionsetup[subfigure]{justification=centering}
     \centering
     \begin{subfigure}[t]{0.33\textwidth}
         \centering
         \includegraphics[width=\textwidth]{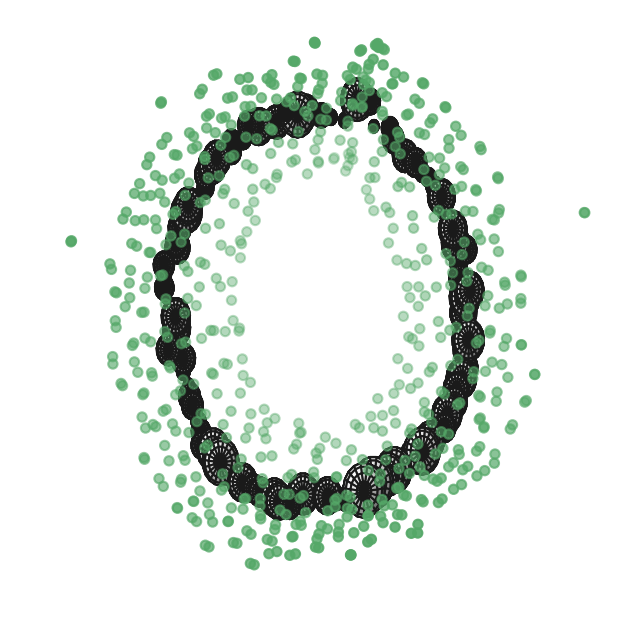}
         \caption{}
     \end{subfigure}
     \hspace{0.5cm}
     \begin{subfigure}[t]{0.33\textwidth}
         \centering
         \includegraphics[width=\textwidth]{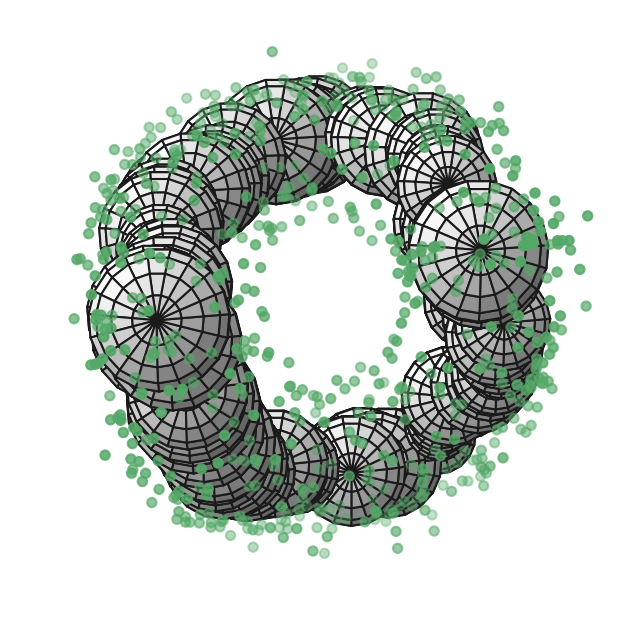}
         \caption{}
     \end{subfigure}
    \caption{Illustration of \ntv{} behavior on a \emph{torus} with and without the incorporation of our topological loss during training. (a) View of the final embedding proposed by \ntv{} using only the standard reconstruction loss, seen from `above'. The transparency of points represents the distance from the observer in the $z$-axis. Drawn in black are the largest inscribable spheres constructed at discretized angle increments around the center of the data set. The tubelike topology of the torus is all but erased. (b) View of the final embedding proposed by \ntv{} \emph{including our new topological loss term} with the largest inscribable spheres drawn inside. The tubelike topology of the torus is recovered.}
    \label{fig_start_torus}
\end{figure}

\subsection{Related Works}
\textbf{Graph embeddings.} We have already mentioned the existence and value of graph-representation models which use as training data some node neighborhood information. This training data can be prescribed via random walks \citep{deepwalk, node2vec} or other criteria \citep{LINE}, after which it is fed into a `skip gram' encoder framework as in \citep{word2vec} to learn a representation.

There are many node-to-vector representation learning models with various emphases or improvements on the general framework, for example: the work of \citep{deep-gaussian} which represents each node as a Gaussian distribution to capture uncertainty; MDS \citep{mds} which has existed for the better part of a century and presently refers to a category of matrix algorithms for representing an input distance matrix (reciprocal of a graph's adjacency matrix) as a set of Euclidean points; neighborhood methods that eschew a skip-gram model for direct matrix factorization \citep{grarep}; and \citep{atpge} leveraging data-motivated symmetries in directed graphs. We refer to \citep{cui2018survey, xu2021understanding} for comprehensive surveys of such methods.

While we focus on \ntv{} as an easily implemented, flexible, and widely used baseline, our method can be applied in parallel with \emph{any} graph representation method from which a \emph{persistence diagram} can be computed at every epoch. This is trivially the case for any method learning an Euclidean representation, such as node-to-vector representation models.

\textbf{Topological optimization.} In order to incorporate topological information into the training of \ntv{}, we rely on \emph{Topological Data Analysis} (TDA), a branch of algebraic topology rooted in the works \citep{persistence_and_simplification,carlsson_zomorodian}. 
\emph{Persistent homology} in particular provides topological descriptors called \emph{persistence diagrams} that summarize the topological information (connected components, loops, cavities, etc.) of structured objects (point clouds, graphs, etc.). These diagrams are frequently compared through partial matching metrics \citep{cohen2005stability,cohen2010lipschitz}. The question of \emph{optimizing} topology 
is introduced in \citep{continuation}. It has found different practical applications such as shape matching \citep{poulenard2018topological}, surface reconstruction \citep{bruel2020topologyAwareSurfaceReconstruction}, graph classification \citep{yim2021optimization}, and topological regularization of machine learning models \citep{chen2019topologicalRegularizer,hu2019topologyPreserving,gabrielsson2020layer}. 

While many applications see topology being applied to the abstract inner layers of machine learning frameworks for normalization or noise reduction \citep{survey}, in our application, as in the setting of \citep{auto}, every step of the learning process proposes a constantly improving Euclidean data-set from which we can directly obtain (and compare) topologies via persistent homology. 

Important theoretical results related to our work are \citep{optimizing}, which establishes the local convergence of stochastic gradient descents of loss functions that incorporate the comparison of topological descriptors, and \citep{leygonie2021framework} which provides a chain rule that enables explicit computation of gradients of topological terms. 

Topological optimization is an active research area and recent variants have been proposed to mitigate some of its limitations: \citep{leygonie2021gradient} shows that the non-smoothness of the persistent homology map can be alleviated by leveraging its stratified structure, and \citet{nigmetov2022topological} propose a way to address the sparsity of topological gradients. 
In this work, we propose another way of improving the behavior of topological loss functions by adding a specially designed entropic regularization term, an idea inspired by advancements in computational optimal transport.

\textbf{Optimal transport for TDA.} Optimal transport literature can be traced back to the works of \citet{monge1781memoire} and \citet{kantorovich2006translocation}. (We refer to the bibliography of \citep{villani2009optimal} for a comprehensive overview of optimal transport history.) The optimal transportation problem is a linear program whose goal is to optimize matchings, thus sharing important similarities with the metrics used in TDA to compare persistence diagrams \citep{mileyko2011probability, turner2014frechet}. This connection has been made explicit in \citep{Divol2021} and leveraged toward statistical applications in a series of works \citep{divol2021estimation,cao2022approximating}. During the last decade, the work of \citet{cuturi} has popularized the use of an \emph{entropic regularization} term (an idea that can be traced back to the work of \citet{schrodinger1932theorie}) which makes the resulting optimization problem strictly convex and hence more efficient to solve, along with ensuring differentiable solutions with respect to the input objects. This opened the door to a wide range of practical applications, see for instance \citep{solomon2015convolutional,benamou2015iterative}. 
The entropic optimal transport problem has been further refined through the introduction of so-called \emph{Sinkhorn divergences}, initially utilized on a heuristic basis 
\citep{ramdas2017wasserstein,genevay2018learning} and then further studied in \citep{feydy,sejourne2019sinkhorn}. A direct use of entropic optimal transport in TDA was first investigated in \citep{largescale}. 
However, it has recently been observed in \citep{lacombe2023hurot} that the problem introduced in the former work suffers from inhomogeneity. 
The latter, more recent work introduces a new regularized transportation problem that can be applied to TDA metrics while preserving the important properties of Sinkhorn divergences (efficient computation, differentiability) and which we further build upon in the present work. 

\subsection{Contributions and Outline}
Section \ref{sec_n2v} discusses preliminary technicalities required to properly introduce the \ntv{} model. 
We describe the simplest implementation of the \ntv{} model and carefully derive the gradient of the training loss.
Section \ref{sec_topological_loss_n2v} introduces the topological loss function which will be used to incorporate topological information into the \ntv{} model. 
In particular, we apply a recently introduced \emph{entropic regularization} of metrics used to compare topological descriptors based on ideas developed in the computational optimal transport community and carefully derive the corresponding gradient. 
To the best of our knowledge, this is the first use of entropic regularization in the context of topological optimization. 
Gathering these results, \cref{sec_tn2v} presents the modified algorithm we use to train this new \emph{Topological} \ntv{} model. 
We provide numerical results in Section \ref{sec_experiments}, elaborating on the results in Figures \ref{fig_start} and \ref{fig_start_torus} to demonstrate that the addition of topological information significantly improves the quality of embeddings. 
Our implementation is publicly available at this \href{https://github.com/killianfmeehan/topological_node2vec}{repository}\footnote{\url{https://github.com/killianfmeehan/topological_node2vec}}.

\section{The Node2vec Model}\label{sec_n2v}

\ntv{} is a machine learning model that learns the representation of a graph's nodes as a set of Euclidean points in some specified dimension. We carefully explain the neighborhood generation methodology characteristic to \ntv{} (based on \texttt{Word2vec} and \texttt{Deepwalk}) and the structure of \ntv{}'s internal parameters, as well as derive the gradient of its loss function.

We elaborate on the details of \ntv{} in this section following the structure laid out in Algorithm \ref{alg_node2vec}. 
\ntv{} learns an embedding in $\mathbb{R}^m$ of a graph by minimizing a loss function $L_0$ that measures some distance between the currently proposed embedding (initially random) and training data extracted from the input graph.

\algnewcommand\algorithmicinput{\textbf{Input:}}
\algnewcommand\INPUT{\item[\algorithmicinput]}

\algnewcommand\algorithmicoutput{\textbf{Output:}}
\algnewcommand\OUTPUT{\item[\algorithmicoutput]}

\algnewcommand\algorithmicstart{\textbf{Start:}}
\algnewcommand\start{\item[\algorithmicstart]}

\begin{algorithm}
\caption{Node2vec algorithm}\label{alg_node2vec}
\begin{algorithmic}
    \INPUT a weighted graph $G=(V,E,w)$, learning rate $\eta > 0$, embedding dimension $m$, neighborhood parameters $(l,r,p,q)$;
    \OUTPUT $\mathbb{P}\subset\mathbb{R}^m$ with $|\mathbb{P}|=|V|$
    \start
    \State • initialize the parameter matrices $\Theta(0)$ (Definition \ref{def_structure}) with values uniformly sampled from the interval $(-1,1)$; let $k=0$;
    \While{not converged}
    \State • for each node $v\in V$, generate the \emph{current neighborhood vector} $C_v$ (Definition \ref{def_predicted}) and the \emph{training neighborhood vector} $T_v$ (Eq.~\eqref{eq:Tv}); let $C(k) = \{C_v(k)\}_{v \in V}$ and $T(k):=\{T_v(k)\}_{v\in V}$
    \State • update $\Theta(k+1)$ using the gradient result of Proposition \ref{prop_ce_gradient}: $$\Theta(k+1) := \Theta(k) - \eta\nabla_{\Theta(k)} L_0(T(k),C(k))$$
    \State • $k\mathrel{+}=1$
    \EndWhile
\end{algorithmic}
\end{algorithm}

We consider in this work weighted, undirected graphs, referred to simply as \emph{graphs}, as represented by a tuple $G=(V,E,w)$, where $V=\{v_1,\ldots,v_n\}$ is a set of vertices with a total order, $E=V\times V$, and $w:E\to\mathbb{R}_{\geq 0}$ is a \emph{symmetric} weight function (i.e., $w(v_i,v_j)=w(v_j,v_i)$ for all $1\leq i<j\leq n$) into the set of non-negative real numbers.

\begin{definition}[\ntv{} structure] \label{def_structure}
Let $G=(V,E,w)$ be a graph, $|V|=n$ denote the number of nodes, and $m\in\mathbb{N}$ be the desired embedding dimension. 
Define $\Theta :=\{W_1 ,W_2 \}$ where  $W_1 \in\mathcal{M}_{n\times m}(\mathbb{R})$ and $W_2 \in\mathcal{M}_{m\times n}(\mathbb{R})$ are matrices of size $n \times m$ and $m \times n$, respectively.
Let $W_1(i,\cdot)$ denote the $i$\textsuperscript{th} row of the matrix $W_1$ for $1 \leq i \leq n$. 
The corresponding \ntv{} embedding is defined as 
\[\mathrm{Emb}(\Theta) = \{ W_1(i,\cdot) \}_{1 \leq i \leq n} \subset \mathbb{R}^m. \] 
When necessary, we will write $\Theta(k), \mathrm{Emb}(\Theta(k))$ to convey dependence on the epoch $k$ during the learning process.
\end{definition}

\begin{definition}[Current neighborhood probability] \label{def_predicted}
For $v=v_i\in V$, given some parameter matrices $\Theta$, the \emph{current predicted neighborhood probability of }$v$ is the vector $C_{v}$ given by the following: Let $\bar{v} = (0,\ldots,0,1,0,\ldots,0)\in\mathbb{R}^n$ be the row vector where $\bar{v}(i) = 1$, and for all $1\leq j\leq n$ with $j\neq i$, $\bar{v}(j) = 0$. 
Define 
\begin{equation}\label{eq:presoftmax}
    u (= u_v) := \bar{v}\cdot {W_1}\cdot {W_2}\in\mathbb{R}^n
\end{equation} and subsequently let $C_v$ be the softmax of $u$: 
\[C_{v} = \left(\frac{e^{u(1)}}{\sum_{1\leq j\leq n} e^{u(j)}},\ldots, \frac{e^{u(n)}}{\sum_{1\leq j\leq n} e^{u(j)}}\right).\]
\end{definition}

For a node $v\in V$, the \emph{training neighborhood probability} vector $T_v$ is generated through the following process: 
Four parameters are needed to dictate how $T_v$ is generated:
\begin{itemize}
    \item $r$: the number of random walks to be generated starting from the node $v$,
    \item $l$: the length of each walk,
    \item $p$: a parameter that determines how often a walk will return to the previous vertex in the walk,
    \item $q$: a parameter that determines how often a walk will move to a new vertex that is not a neighbor of the previous vertex.
\end{itemize}

We generate $r$ walks starting from $v$, each of length $l$, with probability transitions given the parameters $p,q$ and weight function $w:E\to\mathbb{R}_{\geq0}$.
Specifically:
\begin{itemize}
    \item For the first step of the walk, the probability of traveling to any node $v_\textrm{next}$ is the normalized edge weight $$\frac{w(v,v_\textrm{next})}{\sum_{\hat{v}\in V}w(v,\hat{v})}.$$
    \item Suppose the previous step in the walk was $v_\textrm{prev}\to v_\textrm{curr}$. For any node $v_\textrm{next}\in V$, define
    \begin{align*}\xi(v_\textrm{prev},v_\textrm{curr},v_\textrm{next}) =
    \left\{\begin{array}{ll}
    0 & \text{if }w(v_\textrm{curr},v_\textrm{next})=0 \\
    1/p & \text{if }w(v_\textrm{curr},v_\textrm{next})\neq0\text{ and }v_\textrm{prev}=v_\textrm{next} \\
    1 & \text{if }w(v_\textrm{curr},v_\textrm{next})\neq0\text{, }v_\textrm{prev}\neq v_\textrm{next}\text{, and }w(v_\textrm{prev},v_\textrm{next})>0 \\
    1/q & \text{otherwise}.
    \end{array}\right.
\end{align*}
\end{itemize}

Then, the probability of choosing any $v_\textrm{next}$ as the next node in the walk is 
\begin{equation}\label{eq:transition}
\frac{{\xi(v_\textrm{prev},v_\textrm{curr},v_\textrm{next})}\cdot w(v_\textrm{curr},v_\textrm{next})}{\sum_{\hat{v}\in V}\xi(v_\textrm{prev},v_\textrm{curr},\hat{v})\cdot w(v_\textrm{curr},\hat{v})}.
\end{equation}

Collect all the nodes traversed by the $r$ random walks starting from $v$ and save this information in a multiplicity function $t_v:V\to\mathbb{Z}_{\geq 0}$. That is, for any node $v_j\in V$, $t_v(v_j)$ is equal to the number of times this node was reached across all the random walks starting from $v$. 

Finally, represent the information of the multiplicity function $t$ as a probability vector indexed over all the nodes of $V$: 
\begin{equation}\label{eq:Tv}
    T_v=\left(\frac{t_v(v_j)}{l\cdot r}\right)_{1\leq j\leq n}.
\end{equation} When necessary, we will write this vector as $T_v(k)$ to convey dependence on the epoch $k$ during the learning process.

\begin{remark}\label{rmk_r_infinity}
    It is also possible to neglect this random walk process and simply take $T_v$ to be the normalization of the vector $(w(v_1,v),\ldots,w(v_n,v)).$ This is equivalent to the parameter choice of $l=1$ and $r=\infty$. The choice of $p$ and $q$ have no impact on walks of length $l=1$. 
    This captures \emph{immediate} neighborhood information and fails to see structures such as hubs and branches. However, as showcased later in our experiments, there are situations in which this is a computationally superior option.
\end{remark}

\ntv{} is trained by minimizing the following loss $L_0$: 
\begin{equation}\label{eq:n2v_std_loss}
    \Theta \mapsto L_0(T,\Theta) :=\sum_{v\in V}H(T_v,C_v),
\end{equation}
where $H$ denotes the cross-entropy between probability distributions, given by 
$$H(A,B)= -\sum_{y\in Y} A(y)\log B(y)$$
with the convention that $0 \log(0)=0$. 
A practical formula to express this loss is given by the following lemma: 

\begin{lemma}[\citet{node2vec}, Equation (2)]
Let $v\in V$. Treating $C_v$ and $T_v$ as probability distributions over $V$,
\begin{equation*}
    H(T_v,C_v) = -\sum_{1\leq j\leq n}T_v(j)u_v(j) + \log\sum_{1\leq j\leq n} e^{u_v(j)},\label{losssimplified}
\end{equation*}
where $u_v$ is as defined in \eqref{eq:presoftmax}.
\end{lemma}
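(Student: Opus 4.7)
The plan is to unfold the definitions and simplify. First, I would write out the cross entropy definition: since $H(T_v, C_v) = -\sum_{j} T_v(j) \log C_v(j)$, and $C_v(j) = e^{u_v(j)} / \sum_{k} e^{u_v(k)}$ by the softmax definition, substitute to get
\[
H(T_v, C_v) = -\sum_{j} T_v(j) \left[ u_v(j) - \log \sum_{k} e^{u_v(k)} \right].
\]
Distributing the sum splits this into two pieces: $-\sum_j T_v(j) u_v(j)$ plus $\log \sum_k e^{u_v(k)}$ times $\sum_j T_v(j)$.

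The key (and only non-cosmetic) observation is that $T_v$ is a probability distribution over $V$, so $\sum_j T_v(j) = 1$. This is immediate from equation \eqref{eq:Tv}: summing the multiplicities $t_v(v_j)$ over all $j$ gives exactly $l \cdot r$ (the total number of steps across all random walks), and dividing by $l \cdot r$ normalizes the result. With this, the second term collapses to $\log \sum_k e^{u_v(k)}$, yielding the claimed formula.

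There is no real obstacle here; the statement is essentially a rewriting of the softmax cross-entropy in terms of its pre-softmax logits $u_v$. One minor point worth noting is the convention $0 \log 0 = 0$: if some $T_v(j) = 0$, that term contributes nothing both in the original $H$ and in $-\sum_j T_v(j) u_v(j)$, so the identity remains valid. Since $C_v(j) > 0$ for all $j$ (the softmax is strictly positive), $\log C_v(j)$ is always finite and no degenerate cases arise on the right-hand side either.
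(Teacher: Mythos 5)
Your proof is correct: unfolding the softmax inside the cross-entropy and using $\sum_j T_v(j)=1$ (which the lemma's hypothesis grants, and which follows from the normalization in \eqref{eq:Tv}) is exactly the standard derivation, which the paper itself omits by citing Equation (2) of the \texttt{Node2vec} paper. Your remark on the $0\log 0=0$ convention and the strict positivity of the softmax is a nice touch and nothing further is needed.
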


The minimization of \eqref{eq:n2v_std_loss} is performed through gradient descent, for which we provide an explicit expression below.
In the following proposition, we use $M(\cdot,\cdot)$ to denote matrix coordinates in order to avoid excessive subscripting. We also make use of the Kronecker delta notation where for comparable objects $a,b$, $\delta_{a,b}=1$ if $a=b$ and $0$ otherwise.

\begin{proposition}\label{prop_ce_gradient}
Recall that $\Theta = (W_1,W_2)$. Then, the gradient of $L_0$ with respect to $W_1$ is $$\nabla_{W_1}L_0(T,\Theta) = 
\left[\sum_{1\leq i\leq n}
\delta_{v_i,v_l}\cdot\sum_{1\leq a\leq n}W_2(j,a)(C_{v_i}(a)-T_{v_i}(a))\right]_{\substack{1\leq l\leq n\\1\leq j\leq m}}.$$
Also, the gradient of $L_0$ with respect to $W_2$ is
$$\nabla_{W_2}L_0(T,\Theta) = 
\left[\sum_{1\leq i\leq n}W_1(i,j)(C_{v_i}(l)-T_{v_i}(l))\right]_{\substack{1\leq l\leq n\\1\leq j\leq m}}.$$
\end{proposition}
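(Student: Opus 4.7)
The plan is to start from the simplified expression for the cross-entropy given in the preceding lemma, namely
\[
H(T_v, C_v) = -\sum_{1\leq j\leq n} T_v(j)\, u_v(j) + \log \sum_{1\leq j\leq n} e^{u_v(j)},
\]
and differentiate term by term. Since $L_0 = \sum_{v\in V} H(T_v, C_v)$ and $T_v$ does not depend on $\Theta$, linearity of the derivative reduces the problem to computing $\partial u_{v_i}(a) / \partial W_1(l,j)$ and $\partial u_{v_i}(a) / \partial W_2(j,l)$. Recall from \eqref{eq:presoftmax} that $u_{v_i}(a) = \sum_{b=1}^m W_1(i,b)\, W_2(b,a)$, so these two partial derivatives are immediate:
\[
\frac{\partial u_{v_i}(a)}{\partial W_1(l,j)} = \delta_{v_i, v_l}\, W_2(j,a), \qquad \frac{\partial u_{v_i}(a)}{\partial W_2(j,l)} = \delta_{a,l}\, W_1(i,j).
\]

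Next I would handle the log-sum-exp term. Applying the chain rule,
\[
\frac{\partial}{\partial W_\bullet}\!\log \sum_{a=1}^n e^{u_{v_i}(a)} = \sum_{a=1}^n \frac{e^{u_{v_i}(a)}}{\sum_{a'} e^{u_{v_i}(a')}} \cdot \frac{\partial u_{v_i}(a)}{\partial W_\bullet},
\]
and the crucial observation is that the fraction above is exactly $C_{v_i}(a)$ by Definition~\ref{def_predicted}. Substituting this together with the first term yields, for each fixed $v_i$,
\[
\frac{\partial H(T_{v_i},C_{v_i})}{\partial W_\bullet} = \sum_{a=1}^n \bigl(C_{v_i}(a) - T_{v_i}(a)\bigr) \frac{\partial u_{v_i}(a)}{\partial W_\bullet}.
\]

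From here the result follows by plugging in the two formulas for $\partial u_{v_i}(a) / \partial W_\bullet$ computed above and summing over $v_i\in V$. For $W_2$ the Kronecker delta $\delta_{a,l}$ collapses the inner sum to a single term, yielding the stated expression $\sum_i W_1(i,j)(C_{v_i}(l)-T_{v_i}(l))$; for $W_1$ the delta $\delta_{v_i,v_l}$ survives outside the inner sum over $a$, producing the stated double-sum expression. I do not foresee a genuine obstacle here: the computation is a direct application of the chain rule, and the only real care required is careful index bookkeeping, especially the use of the softmax identity to recognize $C_{v_i}(a)$ inside the log-sum-exp derivative and the distinction between which index ($i$ or $a$) the Kronecker delta collapses in each of the two cases.
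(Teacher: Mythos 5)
Your proposal is correct and follows essentially the same route as the paper's proof: direct term-by-term differentiation of the simplified cross-entropy, the Kronecker-delta computation of $\partial u_{v_i}(a)/\partial W_1(l,j)$ and $\partial u_{v_i}(a)/\partial W_2(j,l)$, and the softmax identity turning the log-sum-exp derivative into $C_{v_i}(a)$. Your unified intermediate formula $\sum_a (C_{v_i}(a)-T_{v_i}(a))\,\partial u_{v_i}(a)/\partial W_\bullet$ is merely a tidier packaging of the two parallel calculations carried out in the appendix.
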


The proof of this result is deferred to the appendix. 

\section{A Topological Loss Function for \ntv{}}\label{sec_topological_loss_n2v}
This section presents the important background on topological data analysis required in this work, including the computation of topological descriptors called \emph{persistence diagrams} (PD) and their comparison through \emph{entropy regularized metrics}.
Eventually, we derive an explicit gradient for these metrics that we will use when training \ntv{} with a topological term in its reconstruction loss.

Essentially, we will include a \emph{topological loss} term $L_1$ when training \ntv{} (see Eq.~\eqref{eq:n2v_std_loss}) which penalizes the difference between the PD of \ntv{}'s output and the PD of the input graph.

\subsection{From Point Clouds to Persistence Diagrams}\label{sec_ph}
Persistent homology identifies topological features in structured objects such as point clouds and encodes them as discrete measures called \emph{persistence diagrams}, supported on the open half-plane $\Omega = \{(b,d) \in \mathbb{R}^2,\ d > b\}$. 
Each point in these measures expresses the scales within the original point cloud at which a given topological feature was created (born) and destroyed (died).
We refer to \citep{chazal2016structure, edelsbrunner2022computational} for a general introduction. 
Below, we provide a concise presentation of this theory specialized to the context of our work. 

Let $\mathbb{P}=\{p_1, \ldots, p_n\} \subset \mathbb{R}^m$ be a finite point cloud.
The Vietoris-Rips complex of parameter $\vrparameter$, denoted $\mathrm{VR}_\vrparameter$, is the simplicial complex over $\mathbb{P}$ given by all simplices such that \emph{the longest pairwise distance between the vertices of that simplex} is at most $\vrparameter$.
Increasing $\vrparameter$ from $0$ to the diameter of the point cloud gives us the filtration of simplicial complexes $\mathrm{VR}=\{\mathrm{VR}_\vrparameter\}$. 
A topological feature is one that is enclosed at some $\vrparameter_\textrm{birth}$ (a hollow space enclosed by edges, a hollow void enclosed by triangles, etc.) and then eventually filled in by higher dimensional simplices at some $\vrparameter_\textrm{death}>\vrparameter_\textrm{birth}$.
We call $\vrparameter_\textrm{birth}$ and $\vrparameter_\textrm{death}$ the \emph{birth} and \emph{death} values of the topological feature, respectively.

\begin{figure}
    \centering
    \includegraphics[width=1.00\textwidth]{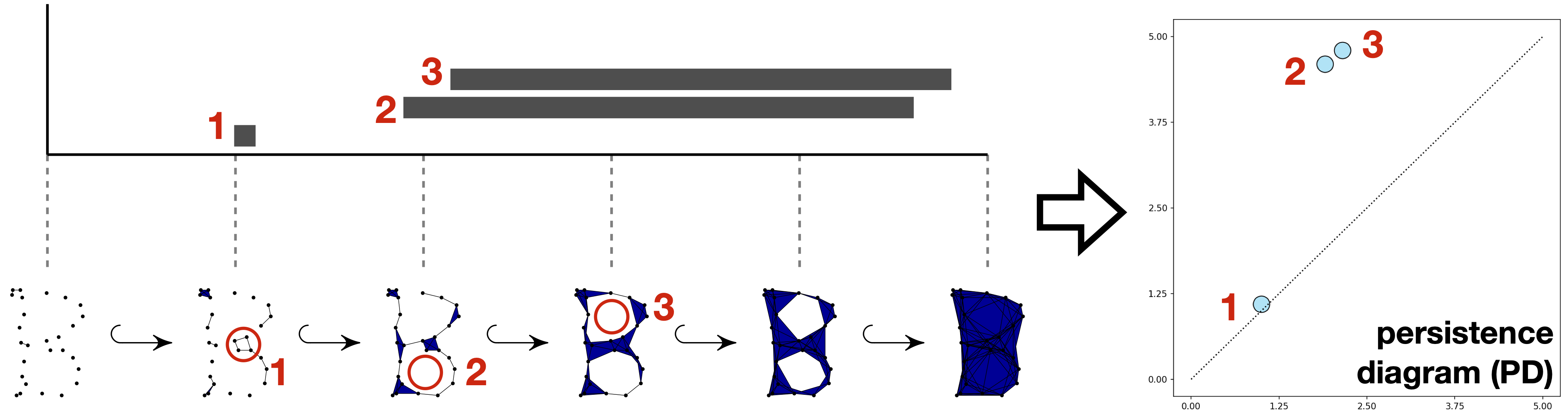}
    \caption{Vietoris-Rips construction of the filtered simplicial complex on a 2D point cloud. The \emph{persistence diagram} represents topological features as points in the upper diagonal half-plane of $\mathbb{R}^2$. Points close to the diagonal such as point $1$ above (birth and death values are very close) are regarded as noise. Features far from the diagonal, such as points $2$ and $3$, \emph{persist} across a large range of scales and are considered significant.}
    \label{fig_persistent_homology}
\end{figure}

We say a point cloud is in Vietoris-Rips general position when all of the birth and death values of all topological features are \emph{unique}. 
The \emph{persistence diagram} of $\mathbb{P}$ is the multiset $\mathrm{PD}(\mathbb{P})=\{x_i=(b_{x_i},d_{x_i})\}_{1\leq i\leq N}\subset\Omega$ collecting all the birth and death values of topological features of $\mathbb{P}$ (See Figure \ref{fig_persistent_homology}). Equivalently, this information can be represented as the finite \emph{counting measure} $\alpha = \sum_{i=1}^N \delta_{x_i}$, where 
$\delta_{x_{i}}$ denotes the Dirac mass located at $x_{i}$.

\subsection{Metrics for PDs and Optimal Transport}\label{sec_ot}
We now elucidate the development of metrics in optimal transport (OT) regarding how they can be applied to measuring the distance between persistence diagrams (PDs). In the following, we adopt the representation of PDs as finite counting measures, a formalism initially introduced in \citep{chazal2016structure} which is vital for discussing connections to OT literature. 

We denote the orthogonal projection of $x \in \Omega$ to the diagonal $\thediag := \{(b,b),\ b \in \mathbb{R} \}$ by $\projdiag(x)$.

\begin{definition}
    The space $\mathcal{D}$  of persistence diagrams is the set of finite counting measures supported on $\Omega$, that is 
    \begin{equation*}
        \mathcal{D} = \left\{ \alpha = \sum_{i=1}^N \delta_{x_i} \ \big| \ x_i\in\Omega, N\in\mathbb{N} \right\}.
    \end{equation*}
    We equip $\mathcal{D}$ with the metric $(\alpha, \beta) \mapsto \FG(\alpha,\beta)^{\frac{1}{2}}$ \citep{cohen2010lipschitz} expressed as the square-root of
    \begin{equation}\label{eq:FG_metric}
        \FG(\alpha,\beta) = \inf_{\zeta \in \Xi(\alpha,\beta)} \sum_{x} \|x - \zeta(x)\|^2,
    \end{equation}
    where $\Xi(\alpha,\beta)$ denotes the set of bijections from $\spt(\alpha) \cup \partial \Omega$ to $\spt(\beta) \cup \partial \Omega$, and $\spt(\mu)$ denotes the support of a counting measure $\mu$ (i.e.~the set of all $x$ such that $\mu(\{x\}) >0$). 
    Such a $\zeta$ essentially bijects a subset of $\alpha$ onto a subset of $\beta$ while mapping all leftover points to the diagonal $\partial \Omega$. We call $\zeta$ a \emph{partial matching} between $\alpha$ and $\beta$.
    Any $\zeta^* \in \Xi(\alpha,\beta)$ that achieves the infimum in \eqref{eq:FG_metric} is said to be an \emph{optimal partial matching} between $\alpha$ and $\beta$.
\end{definition}

\begin{remark}
    In practice, persistence diagrams come with an \emph{essential component} which consists of points whose death coordinate is $+\infty$. 
    In the context of Vietoris-Rips filtrations built on top of arbitrary point clouds, there is exactly one such point in the persistence diagram of the form $(0,+\infty)$.
    We ignore this for the purposes of our analysis.
\end{remark}

\begin{remark}
In the TDA literature, the metric $\FG(\cdot,\cdot)^{\frac{1}{2}}$ is often referred to as the \emph{Wasserstein distance between persistence diagrams} (see, e.g., \citep{cohen2010lipschitz,turner2014frechet}), using an analogy with the Wasserstein distance introduced in OT literature \citep{villani2009optimal,santambrogio2015optimal,peyre2019computational}. 
A formal connection between these two concepts was established in \citep{Divol2021}, where it was shown that the metric used in TDA is equivalent to the \emph{optimal transport problem with Dirichlet boundary condition} introduced by \citet{Figalli2010}. 
We use the notation $\FG$ to stress that 
this metric is 
\emph{not} equivalent to the Wasserstein distance used in OT. 
Adapting tools from OT literature to PDs (entropic regularization, gradients) will require this level of technical care.
\end{remark}

The following reformulation of $\FG$ will be convenient going forward: 
\begin{proposition}[\citep{Divol2021,lacombe2023hurot}] \label{prop:from_matching_to_matrix}
Let $\alpha,\beta \in \mathcal{D}$ be two persistence diagrams with $\alpha = \sum_{i=1}^N \delta_{x_i}$ and $\beta = \sum_{j=1}^M \delta_{y_j}$. The (squared) distance between $\alpha$ and $\beta$ can be rephrased as the following minimization problem:
\begin{equation}\label{eq:OT_TDA}
\begin{aligned}
    \FG(\alpha,\beta) = \inf_{P \in \Pi} & \bigg[ \sum_{\substack{1 \leq i \leq N,\\ 1 \leq j \leq M}} \|x_i - y_j\|^2 P_{ij} \\
    &+ \sum_{i=1}^N \|x_i - \projdiag(x_i)\|^2 \left(1 - \sum_{j=1}^M P_{ij}\right) \\
    &+ \sum_{j=1}^M \|y_j - \projdiag(y_j)\|^2 \left(1 - \sum_{i=1}^N P_{ij} \right) \bigg],
\end{aligned}
\end{equation}
where
\begin{equation}\label{eq:submarginal_polytope}
    \Pi = \left\{ P \in \mathbb{R}_{\geq 0}^{N \times M},\ \forall i \in \{1,\dots,N\}, \forall j \in \{1,\dots,M\},\ \sum_{i'=1}^N P_{i'j} \leq 1, \sum_{j'=1}^M P_{ij'} \leq 1 \right\}.
\end{equation}
\end{proposition}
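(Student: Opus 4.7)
The plan is to recognize the right-hand side of \eqref{eq:OT_TDA} as a \emph{linear program} over the polytope $\Pi$, and to exhibit a cost-preserving bijection between its vertex solutions and the partial matchings appearing in the definition \eqref{eq:FG_metric} of $\FG$.

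Concretely, I would first expand the objective in \eqref{eq:OT_TDA} to isolate its dependence on $P$. Distributing the three terms yields
\begin{equation*}
    \sum_{i=1}^N \|x_i - \projdiag(x_i)\|^2 + \sum_{j=1}^M \|y_j - \projdiag(y_j)\|^2 + \sum_{i,j} c_{ij}\, P_{ij},
\end{equation*}
where $c_{ij} := \|x_i - y_j\|^2 - \|x_i - \projdiag(x_i)\|^2 - \|y_j - \projdiag(y_j)\|^2$. So we are minimizing a linear function of $P$ over the bounded polytope $\Pi$. Since the constraint matrix of $\Pi$ is (up to signs) the node-edge incidence matrix of a bipartite graph, it is totally unimodular; combined with the upper bounds $P_{ij} \leq 1$ forced by the submarginal constraints in \eqref{eq:submarginal_polytope}, this guarantees that every vertex of $\Pi$ is a $0/1$ matrix with at most one $1$ per row and per column. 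Because a linear objective over a bounded polytope attains its minimum at a vertex, the infimum in \eqref{eq:OT_TDA} can be restricted to such $0/1$ matrices.

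Finally, I would construct the correspondence with $\Xi(\alpha,\beta)$. A $0/1$ matrix $P$ of the above type defines a $\zeta \in \Xi(\alpha,\beta)$ by setting $\zeta(x_i) = y_j$ whenever $P_{ij}=1$, $\zeta(x_i) = \projdiag(x_i)$ when the $i$\textsuperscript{th} row of $P$ is zero, $\zeta(\projdiag(y_j)) = y_j$ when the $j$\textsuperscript{th} column of $P$ is zero, and extending $\zeta$ by the identity on the remaining diagonal-to-diagonal points (which contributes zero cost). A direct term-by-term verification shows that the cost $\sum_x \|x - \zeta(x)\|^2$ of this $\zeta$ coincides with the linear LP objective evaluated at $P$, and conversely every $\zeta \in \Xi(\alpha,\beta)$ of finite cost arises in this way. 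The main obstacle I foresee is rigorously handling the infinite freedom inherent in $\Xi(\alpha,\beta)$: because $\spt(\alpha) \cup \partial\Omega$ and $\spt(\beta)\cup \partial\Omega$ both contain the uncountable diagonal, one must argue carefully that the sum defining the cost in \eqref{eq:FG_metric} is only nontrivial on finitely many inputs and that any diagonal-to-diagonal action can be chosen at zero cost, so that the finite LP over $\Pi$ really does capture the full infimum.
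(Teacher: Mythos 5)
Your proposal is correct and follows essentially the route the paper itself takes: the paper gives no proof beyond citing \citep{Divol2021,lacombe2023hurot} and remarking that the relaxation from bijections to sub-marginally constrained matrices is a consequence of Birkhoff's theorem, and your total-unimodularity/vertex argument for the doubly substochastic polytope $\Pi$ (vertices are partial permutation matrices, a linear objective attains its minimum at one of them) is precisely the standard way to make that remark rigorous. Two small touch-ups: under the paper's literal definition of $\Xi(\alpha,\beta)$ (set bijections with the diagonal counted once), the leftover diagonal-to-diagonal part generally cannot be arranged at \emph{exactly} zero cost when the unmatched projections on the two sides differ, so one either adopts the usual infinite-multiplicity convention for $\partial\Omega$ or closes the gap with an arbitrarily cheap shift of countably many diagonal points (which still yields equality of the infima); and the converse direction is cleaner stated as an inequality --- any finite-cost $\zeta$ induces a partial permutation matrix whose objective in \eqref{eq:OT_TDA} is at most the cost of $\zeta$, since $\projdiag(x)$ is the nearest diagonal point --- rather than the claim that every finite-cost $\zeta$ literally arises from your construction.
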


The main claim of \cref{prop:from_matching_to_matrix} is that the infimum over bijections $\zeta$ in \eqref{eq:FG_metric} can be relaxed to a minimization over matrices $P$ that satisfy \emph{sub-marginal} constraints as a consequence of Birkoff's theorem. 
Such a $P$ will be referred to as a \emph{partial transport plan}.

\subsection{Entropic Regularization for PD Metrics}
While theoretically appealing, the metric $\FG$ has drawbacks. 
It is computationally expensive and, though the optimal matching $\zeta^*$ is generically unique, it suffers from instability: a small perturbation in $\alpha$ can significantly change $\zeta^*$, which is a clearly undesirable behavior when it comes to optimizing topological losses (as detailed in section \ref{sec_vr}).

In computational optimal transport, an alternative approach popularized by \citet{cuturi} consists of introducing a \emph{regularization term} based on \emph{entropy}. An illustrative translation of this idea into the context of PD metrics would be to consider the following adjusted problem from Proposition \ref{prop:from_matching_to_matrix}:
\begin{equation}\label{eq:naive_OT_reg}
\begin{aligned}
    \text{minimize } P \mapsto \bigg[ &\sum_{\substack{1 \leq i \leq N,\\ 1 \leq j \leq M}} \|x_i - y_j\|^2 P_{ij} \\ 
    &+ \sum_{i=1}^N \|x_i - \projdiag(x_i)\|^2 \left(1 - \sum_{j=1}^M P_{ij}\right) \\
    &+ \sum_{j=1}^M \|y_j - \projdiag(y_j)\|^2 \left(1 - \sum_{i=1}^N P_{ij} \right) \\
    &+ \entropicregparam \KL(P | 1_{NM}) \bigg],
\end{aligned}
\end{equation}
where $P \in \Pi$, $\entropicregparam > 0$ is the `regularization parameter', $1_{NM}$ denotes the matrix of size $N \times M$ filled with $1$s, and $\displaystyle \KL(A | B) = \sum_{\substack{1 \leq i \leq N, \\ 1 \leq j \leq M}} A_{ij} \log\left( \frac{A_{ij}}{B_{ij}} \right) - A_{ij} + B_{ij}$ for any two matrices $A,B \in \mathbb{R}^{N \times M}$. 
In \citep{largescale} it was shown that, just like its OT counterpart, \eqref{eq:naive_OT_reg} can be solved by the Sinkhorn algorithm \citep{sinkhorn1964relationship}, a simple iterative algorithm that is highly parallelizable and GPU-friendly.

However, an important observation made in \citep{lacombe2023hurot} is that \eqref{eq:naive_OT_reg} is not $1$-homogeneous in $(\alpha,\beta)$ when $\entropicregparam > 0$ (while the same equation with $\entropicregparam = 0$ \emph{is}). While this is of no consequence to the typical OT practitioner (see \cite[Section~3]{lacombe2023hurot}), it dramatically affects persistence diagrams, which may have highly disparate total masses. Loosely speaking, when $N, M$ are large, the entropic contribution tends to outweigh the transport cost in \eqref{eq:naive_OT_reg}, yielding transportation plans that concentrate near the diagonal. To overcome this behavior, we propose the following version of entropic regularization for PD metrics, building on Homogeneous Unbalanced Regularized OT (HUROT) \citep{lacombe2023hurot}:

\begin{definition}\label{def_hurot}
Let $\alpha=\sum_{i=1}^N\delta_{x_i},\beta =\sum_{j=1}^M\delta_{y_j}\in \mathcal{D}$  and $\entropicregparam > 0$. 
The HUROT problem $\FG_\entropicregparam$ between $\alpha$ and $\beta$ is defined as
\begin{equation}\label{eq:HUROT_TDA}
\begin{aligned}
    \FG_\entropicregparam(\alpha,\beta) := \inf_{P \in \Pi} & \bigg[ \sum_{\substack{1 \leq i \leq N, \\ 1 \leq j \leq M}} \|x_i - y_j\|^2 P_{ij} \\ 
    &+ \sum_{i=1}^N \|x_i - \projdiag(x_i)\|^2 \left(1 - \sum_{j=1}^M P_{ij}\right) \\
    &+ \sum_{j=1}^M \|y_j - \projdiag(y_j)\|^2 \left(1 - \sum_{i=1}^N P_{ij} \right) \\
    &+ \entropicregparam R(P|\mathbf{a} \mathbf{b}^T) \bigg],
\end{aligned}
\end{equation}
 where $\Pi$ is defined in \eqref{eq:submarginal_polytope} and the vectors $\mathbf{a} = (\|x_i - \projdiag(x_i)\|^2)_{i=1}^N \in \mathbb{R}^N$ and $\mathbf{b} = (\|y_j - \projdiag(y_j)\|^2)_{j=1}^M \in \mathbb{R}^M$ are both treated as column vectors, resulting in $\mathbf{a} \mathbf{b}^T$ being a $N \times M$ matrix. The homogeneous entropic regularization term is given by
\begin{equation}\label{eq:hurot_entropic_term}
    R(P | \mathbf{a} \mathbf{b}^T) := \frac{1}{2}\left( \KL\left(P \bigg| \frac{\mathbf{a} \mathbf{b}^T}{\pers(\alpha)}\right) + \KL\left(P \bigg| \frac{\mathbf{a} \mathbf{b}^T}{\pers(\beta)}\right)  \right)
\end{equation}
where $\displaystyle \pers(\alpha) = \sum_{i=1}^N \|x_i - \projdiag(x_i)\|^2$ is called the \emph{total persistence} of $\alpha$ (and similarly for $\beta$). 

We additionally define the \emph{Sinkhorn divergence between persistence diagrams} to be
\begin{equation}\label{eq:HUROT_SFG}
    \SFG_\entropicregparam(\alpha,\beta) := \FG_\entropicregparam(\alpha,\beta) - \frac{1}{2} \FG_\entropicregparam(\alpha,\alpha) - \frac{1}{2} \FG_\entropicregparam(\beta,\beta).
\end{equation}
\end{definition}

The main benefit of working with $\FG_\entropicregparam$ instead of $\FG$
is that the former is defined through a strictly convex optimization problem (while being $1$-homogeneous in $(\alpha,\beta)$, a major improvement over \eqref{eq:naive_OT_reg} used in \citep{largescale}). 
In addition to what we gain in terms of computational efficiency, we ensure that the optimal partial transport plan $P^\entropicregparam_{\alpha,\beta}$ between $\alpha$ and $\beta$ for this regularized problem is unique and smooth in $(\alpha, \beta)$.

Although $\FG_\entropicregparam$ approximates $\FG$ in a controlled way (in particular when $\entropicregparam \to 0$, see \citep{altschuler2017near,dvurechensky2018computational,pham2020unbalanced}), it also introduces an unavoidable \emph{bias}: namely, while $\FG(\cdot,\cdot)^{\frac{1}{2}}$ is indeed a metric in $\mathcal{D}$, in general we have that $\FG_\entropicregparam(\beta,\beta) > 0$. 
It is even the case that $\min_\alpha \FG_\entropicregparam(\alpha,\beta) < \FG_\entropicregparam(\beta,\beta)$ \citep{feydy}, meaning that if we minimize the map $\alpha \mapsto \FG_\entropicregparam(\alpha,\beta)$ through a gradient-descent-like algorithm, $\alpha$ is not pushed ``toward $\beta$'', but rather an inwardly shrunken version of it (see for instance \citep{janati2020debiased}). 

Finally, this bias is compensated for in \eqref{eq:HUROT_SFG} by our introduction of \emph{Sinkhorn divergence for persistence diagrams}, which follows parallel ideas developed in OT  \citep{ramdas2017wasserstein,genevay2018learning,feydy}. 
In the case of persistence diagrams, it was proved in \citep{lacombe2023hurot} that under relatively mild assumptions we can guarantee $\SFG_\entropicregparam(\alpha,\beta) \geq 0$, with equality if and only if $\alpha = \beta$. 
Furthermore, $\SFG_\entropicregparam(\alpha_n,\beta) \to 0 \Leftrightarrow \FG(\alpha_n,\beta) \to 0$ for any $\beta \in \mathcal{D}$ and any sequence of persistence diagrams $(\alpha_n)_n$. 

\subsection{Gradient of the Vietoris-Rips Map}\label{sec_vr}
Recall from Section \ref{sec_ph} that in the Vietoris-Rips construction of persistence diagrams any point $x=(b_x,d_x)\in \Omega$ in the PD of $\mathbb{P}=\{p_1,\ldots,p_n\}$ represents some topological feature, where $b_x$ is its birth value and $d_x$ is its death value.
These values correspond to unique (assuming Vietoris-Rips general position) simplices that cause the birth and death. Specifically, the birth value is equal to the length of the longest edge in the simplex $\sigma_{b_x}$ which \emph{encloses} some perimeter or void, while the death value is equal to the length of the longest edge in the simplex $\sigma_{d_x}$ which \emph{fills in} the enclosed space.

Further denote the unique longest edge of any $\sigma$ by the $1$-simplex $\bar{\sigma}$. That is, the generator $x=(b_x,d_x)$ has birth and death values determined by simplices $\sigma_{b_x},\sigma_{d_x}$, which have longest edges given by $\bar{\sigma}_{b_x}=\{p_{b_x}^1,p_{b_x}^2\}\subset \mathbb{P}$ and $\bar{\sigma}_{d_x}=\{p_{d_x}^1,p_{d_x}^2\}\subset \mathbb{P}$.

See Figure \ref{grad_pc_movement} for a visual explanation of the following lemma.

\begin{lemma}[Lemma 3.5 in \citep{continuation}]\label{continuation_lemma} 
Any generator $x=(b_x,d_x)\in\mathrm{PD}(\mathbb{P})$, seen as a map $\mathbb{R}^n\to\mathbb{R}^2$, is differentiable.
Namely, for $p\in \mathbb{P}$,
$$\dfrac{\partial b_x}{\partial p}=\dfrac{(\delta_{p,p_{b_x}^1}-\delta_{p,p_{b_x}^2})(p_{b_x}^1-p_{b_x}^2)}{\|p_{b_x}^1-p_{b_x}^2\|}\,\,\,\,\text{ and }\,\,\,\,\dfrac{\partial d_x}{\partial p}=\dfrac{(\delta_{p,p_{d_x}^1}-\delta_{p,p_{d_x}^2})(p_{d_x}^1-p_{d_x}^2)}{\|p_{d_x}^1-p_{d_x}^2\|}.$$
The Kronecker delta difference term simply gives us that the partial derivative is zero save for when $p$ is precisely one of the two determining points of the birth (resp. death) edge.
\end{lemma}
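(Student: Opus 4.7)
The plan is to invoke the Vietoris-Rips general-position hypothesis to locally reduce the persistence computation to differentiating a single Euclidean norm. Under general position, all pairwise distances are distinct, so each birth (resp.\ death) of a feature $x = (b_x,d_x)$ is triggered by a uniquely determined simplex $\sigma_{b_x}$ (resp.\ $\sigma_{d_x}$) whose unique longest edge $\bar{\sigma}_{b_x} = \{p_{b_x}^1, p_{b_x}^2\}$ (resp.\ $\bar{\sigma}_{d_x} = \{p_{d_x}^1, p_{d_x}^2\}$) attains the critical filtration value. Consequently, viewed as functions of the point coordinates,
\begin{equation*}
    b_x = \|p_{b_x}^1 - p_{b_x}^2\|, \qquad d_x = \|p_{d_x}^1 - p_{d_x}^2\|.
\end{equation*}

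Next I would establish that these combinatorial identifications persist under small perturbations of $\mathbb{P}$. Because general position is determined by finitely many strict inequalities between pairwise distances (and hence between the longest-edge lengths of the simplices governing the filtration order and the persistence pairing), there is an open neighborhood of $\mathbb{P}$ on which the labels $\bar{\sigma}_{b_x}$ and $\bar{\sigma}_{d_x}$ remain unchanged. On this neighborhood, $b_x$ and $d_x$ therefore coincide with the smooth maps given above, which are in turn differentiable in the coordinates of $\mathbb{P}$.

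It then remains to compute the partial derivatives. Differentiating $\|q_1 - q_2\|$ with respect to a generic point $p \in \mathbb{P}$ yields zero whenever $p \notin \{q_1, q_2\}$ and otherwise gives $\pm (q_1 - q_2)/\|q_1 - q_2\|$, with sign $+1$ if $p = q_1$ and $-1$ if $p = q_2$. The combination $\delta_{p,q_1} - \delta_{p,q_2}$ packages exactly this case analysis, immediately producing the stated formula for $\partial b_x / \partial p$; the expression for $\partial d_x / \partial p$ is identical after replacing $b_x$ by $d_x$.

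The main obstacle is the local-constancy step: one must carefully argue that the persistence pairing itself, and not merely the filtration order, is preserved under small perturbations of $\mathbb{P}$ in general position. Once this combinatorial stability is in hand, everything else reduces to the elementary derivative of a Euclidean norm and the result follows.
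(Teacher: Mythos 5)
The paper does not actually prove this lemma; it is imported verbatim as Lemma~3.5 of \citep{continuation}, so there is no in-paper argument to compare against. Your reconstruction is nonetheless essentially the standard proof used in that reference: under general position each birth (resp.\ death) value coincides, on a neighborhood of the point cloud, with the length of a fixed edge, and differentiating $\|q_1-q_2\|$ gives exactly the stated Kronecker-delta formula, so your computation is correct. The one step you flag as an ``obstacle''---that the persistence \emph{pairing}, not just the filtration order, is locally constant---is not really an open gap: the pairing produced by the persistence algorithm is a function of the ordered boundary matrix alone, hence of the total order on simplices. In a Vietoris--Rips filtration that order is determined by the (finitely many, strict) inequalities among pairwise distances together with the ties between a simplex and the cofaces sharing its longest edge; the latter are purely combinatorial and independent of the geometry, and the former persist on an open neighborhood of $\mathbb{P}$, so the pairing, the birth/death simplices, and their longest edges are all locally constant, which closes your argument. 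One small point of care: the paper's notion of ``Vietoris--Rips general position'' is uniqueness of all birth and death values, which is not literally the same as uniqueness of all pairwise distances; the proof really wants both (as is generically the case), since distinct pairwise distances give the local constancy of the order while uniqueness of birth/death values pins down the single edge responsible for each coordinate of the generator.
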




\begin{figure}[t]
    \centering
    \includegraphics[width=.65\textwidth]{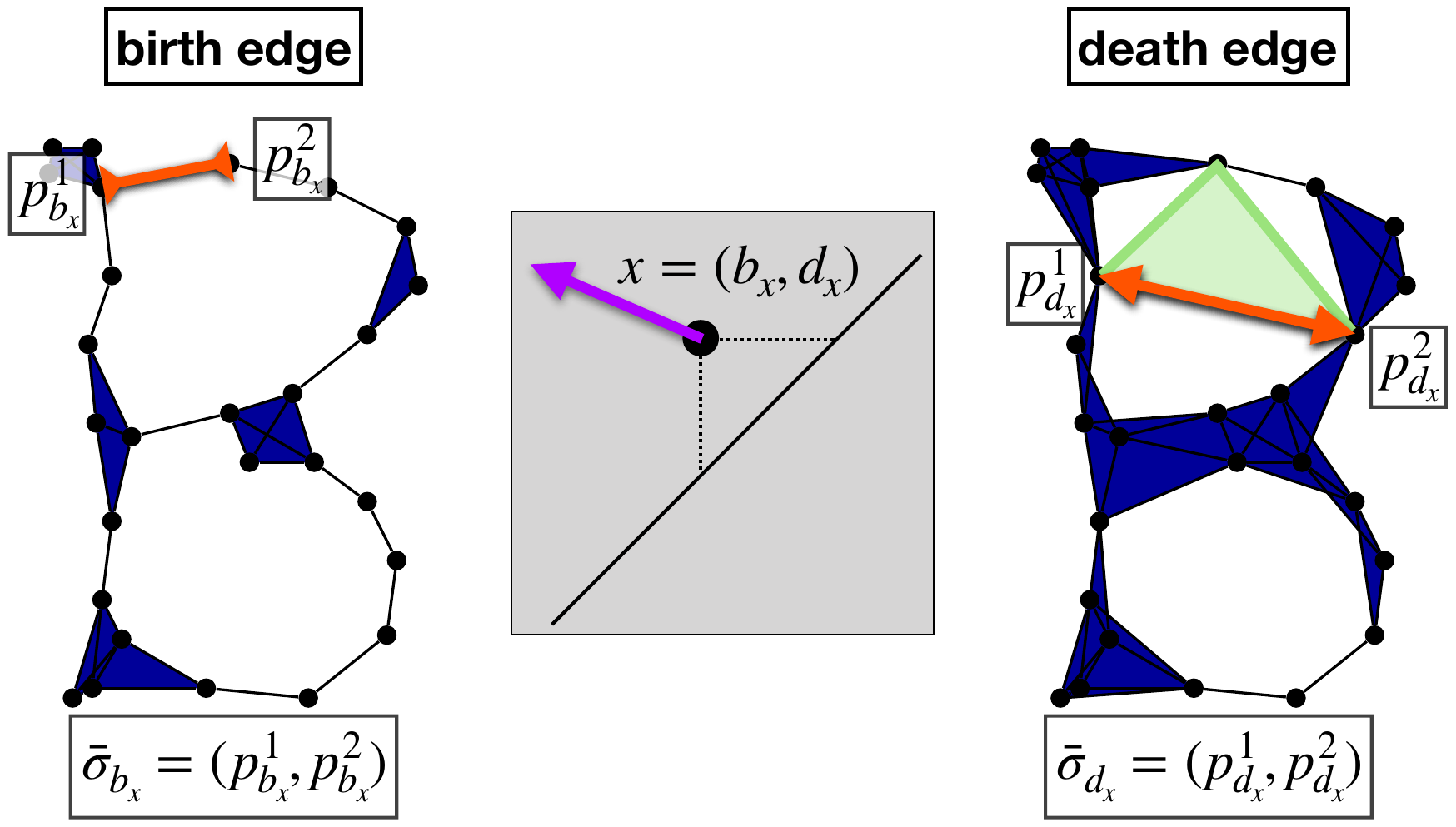}
    \caption{A visualization of how arrows (gradients) in the PD translate to arrows in the point cloud as prescribed by Lemma \ref{continuation_lemma}. A non-trivial generator in the PD corresponds to two edges: the one responsible for its birth (the edge that closes off some perimeter) and the one responsible for its death (the longest edge of the triangle that fills in that perimeter). Movement of the generator in the birth coordinate corresponds to \emph{extension} and \emph{contraction} of the birth edge, while movement in the death coordinate is the same for the death edge. By extension and contraction, we mean that the two points comprising this `edge' are moved simultaneously directly toward or directly away from each other. Specifically, movement of the generator in the negative direction of one axis represents contraction, while movement in the positive direction represents extension.}
    \label{grad_pc_movement}
\end{figure}

\subsection{Gradients for Topological Losses}\label{sec_gradients_top}
We consider the gradient of the map $\alpha \mapsto d(\alpha,\beta)$ when $d$ is either the squared metric $\FG$ introduced in \eqref{eq:FG_metric} or the Sinkhorn divergence for persistence diagrams $\SFG_\entropicregparam$ \eqref{eq:HUROT_SFG} based on the HUROT problem $\FG_\entropicregparam$ \eqref{eq:HUROT_TDA}. 
When minimizing the map $\alpha \mapsto \FG(\alpha,\beta)$, a typical reasoning used in topological data analysis literature
is the following: if we let $\zeta^*$ be the (generically unique) optimal partial matching between $\alpha$ and $\beta$,
it is natural to define a pseudo-gradient of $\alpha \mapsto \mathrm{FG}(\alpha,\beta)$ by 
\begin{equation}\label{eq:grad_FG}
    \nabla_{\alpha} \FG(\alpha, \beta) : x \mapsto \nabla_x \|x - \zeta^*(x)\|^2 = 2 (x - \zeta^*(x)).
\end{equation} 
That is, a gradient step pushes each $x$ in $\alpha$ toward its corresponding $\zeta^*(x)$ (whether this belongs to the support of $\beta$ or to the diagonal $\partial \Omega$). 
A pseudo-gradient of the map $\alpha \mapsto \FG(\alpha,\beta)$ defined in this way can be identified with the usual meaning of gradient for the map $(\RR^2)^N \to \RR$ given by $(x_1,\dots,x_N) \mapsto \FG\left( \sum_{i=1}^N \delta_{x_i}, \beta \right)$. 

A key contribution of \citet[Section~3.2 and 3.3]{leygonie2021framework} was in showing that this pseudo-gradient can indeed be used in a chain rule setting with \cref{continuation_lemma}. 
Formally, gradients of maps of the form $Q \mapsto L(\mathrm{PD}(Q))$ (with $L: \DD \to \RR$) can be obtained as the composition of the corresponding gradient and pseudo-gradient. 
In particular, though the parametrization of $\alpha$ and of any gradients by the $n$-tuple $(x_1,\dots,x_N)$ depends on the choice of an ordering, the eventual chain rule is independent of this choice.

Building on this result, we derive the gradient of the Sinkhorn divergence for persistence diagrams that we use in our implementation. To alleviate notation,  for some partial transport plan $P \in \Pi$ we will write its total mass as $$\mathbf{M}(P) = \sum_{\substack{1 \leq i \leq N, \\ 1 \leq j \leq M}} P_{ij}$$ and we let $$\mathbf{M}_i(P) = \sum_{1 \leq j \leq M} P_{ij}$$ (the fraction mass transported from $x_i$ to any of the $y_j$, $j = 1,\dots,M$). 
We then introduce the \emph{barycentric map} with parameter $\entropicregparam > 0$ from $\alpha$ to $\beta$ defined on $\{x_1,\dots,x_N\}$ by
    \[ \frac{1}{2} T^\entropicregparam_{\alpha,\beta}(x_i) = \projdiag(x_i) \left(1 - \mathbf{M}_i (P_{\alpha,\beta}^\entropicregparam)\right) + \sum_{j=1}^M (P^\entropicregparam_{\alpha,\beta})_{ij} \cdot y_j,\]
where $P^\entropicregparam_{\alpha,\beta}$ is the unique optimal partial transportation plan for $\FG_\entropicregparam(\alpha,\beta)$. 
Note that the self-optimal partial transportation plan $P^\entropicregparam_{\alpha,\alpha}$ is necessarily symmetric.

\begin{proposition}
    Let $\alpha = \sum_{i=1}^N \delta_{x_i},\ \beta = \sum_{j=1}^M \delta_{y_j}$ be two persistence diagrams. The gradient of $\alpha \mapsto \SFG_\entropicregparam(\alpha,\beta)$ in the sense of \citep{leygonie2021framework} is the map defined on $\{x_1,\dots,x_N\}$ by
    \begin{equation}\label{eq:grad_SFG}
        \nabla_\alpha \SFG_\entropicregparam(\alpha,\beta) : x_i \mapsto T^\entropicregparam_{\alpha,\alpha}(x_i) - T^\entropicregparam_{\alpha,\beta}(x_i) + \entropicregparam K_i
    \end{equation}
    with
    \[ K_i = \left[ \frac{1}{2} \frac{d_i - b_i}{\pers(\alpha)} \left(\mathbf{M} (P^{\entropicregparam}_{\alpha,\beta})  - \mathbf{M}(P^{\entropicregparam}_{\alpha,\alpha}) \right) - \frac{2}{d_i - b_i} \left( \mathbf{M}_i (P^{\entropicregparam}_{\alpha,\beta}) - \mathbf{M}_i (P^{\entropicregparam}_{\alpha,\alpha}) \right) \right] \begin{pmatrix}-1\\1\end{pmatrix}, \]
    where $x_i = (b_i, d_i) \in \Omega$ (so that $d_i > b_i$), for $i = 1,\dots,N$. 
\end{proposition}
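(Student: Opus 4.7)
The plan is to establish the identity via the envelope theorem (Danskin's theorem), taking advantage of the strict convexity of the HUROT problem defined in \eqref{eq:HUROT_TDA}, which guarantees that the optimal partial transport plan $P^\entropicregparam_{\alpha,\beta}$ is unique and smoothly parametrized by $(\alpha,\beta)$. Since
\[ \SFG_\entropicregparam(\alpha,\beta) = \FG_\entropicregparam(\alpha,\beta) - \tfrac{1}{2}\FG_\entropicregparam(\alpha,\alpha) - \tfrac{1}{2}\FG_\entropicregparam(\beta,\beta) \]
and the last term does not depend on $\alpha$, I would handle $\FG_\entropicregparam(\alpha,\beta)$ and $\FG_\entropicregparam(\alpha,\alpha)$ separately and subtract at the end. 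Throughout the computation, I write $\|x_i - \projdiag(x_i)\|^2 = \tfrac{1}{2}(d_i - b_i)^2$, so that $\nabla_{x_i}\|x_i - \projdiag(x_i)\|^2 = (d_i - b_i)\begin{pmatrix}-1\\1\end{pmatrix}$ and $\nabla_{x_i}\log\|x_i - \projdiag(x_i)\|^2 = \tfrac{2}{d_i - b_i}\begin{pmatrix}-1\\1\end{pmatrix}$.

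For $\FG_\entropicregparam(\alpha,\beta)$, I would freeze $P^* = P^\entropicregparam_{\alpha,\beta}$ and differentiate the objective of \eqref{eq:HUROT_TDA} with respect to $x_i$. The transport term $\sum_j \|x_i - y_j\|^2 P^*_{ij}$ yields $2\sum_j (x_i - y_j) P^*_{ij}$, while the diagonal penalty $\|x_i - \projdiag(x_i)\|^2(1 - \mathbf{M}_i(P^*))$ contributes $(d_i - b_i)(1 - \mathbf{M}_i(P^*))\begin{pmatrix}-1\\1\end{pmatrix} = 2(x_i - \projdiag(x_i))(1 - \mathbf{M}_i(P^*))$. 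A routine algebraic rearrangement (adding and subtracting $2x_i\mathbf{M}_i(P^*)$) collapses this sum to $2x_i - T^\entropicregparam_{\alpha,\beta}(x_i)$ by the definition of the barycentric map. For the regularization term, I expand $R(P^*|\mathbf{a}\mathbf{b}^T)$ via the formula for $\KL$: only the terms involving $\log a_i$ and $\log\pers(\alpha)$ depend on $x_i$, producing
\[ \entropicregparam\left[ -\frac{2\mathbf{M}_i(P^*)}{d_i - b_i} + \tfrac{1}{2}\mathbf{M}(P^*)\frac{d_i - b_i}{\pers(\alpha)}\right]\begin{pmatrix}-1\\1\end{pmatrix}, \]
where the factor $\tfrac{1}{2}$ comes from the two symmetric $\KL$ terms in \eqref{eq:hurot_entropic_term} (only one of which contains $\pers(\alpha)$, while both contain $\log a_i$).

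For $\FG_\entropicregparam(\alpha,\alpha)$, the key observation is that $\alpha$ appears in both slots. Writing $F(x_1,\dots,x_N,y_1,\dots,y_M)$ for the HUROT value as a function of the atom locations, symmetry $F(x,y) = F(y,x)$ together with the symmetry of $P^\entropicregparam_{\alpha,\alpha}$ gives $\partial F/\partial x_i|_{y=x} = \partial F/\partial y_i|_{y=x}$, so that $\nabla_{x_i}\FG_\entropicregparam(\alpha,\alpha) = 2\cdot\partial F/\partial x_i|_{y=x}$. This exactly doubles the single-slot contribution obtained by the previous calculation with $P^* = P^\entropicregparam_{\alpha,\alpha}$, yielding $2(2x_i - T^\entropicregparam_{\alpha,\alpha}(x_i)) + 2\entropicregparam K_i^{(\alpha)}$ where $K_i^{(\alpha)}$ has the same shape as above with $P^\entropicregparam_{\alpha,\beta}$ replaced by $P^\entropicregparam_{\alpha,\alpha}$.

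Finally I would subtract $\tfrac{1}{2}$ of the self-term from the cross-term: the $2x_i$ contributions cancel, the barycentric maps recombine as $T^\entropicregparam_{\alpha,\alpha}(x_i) - T^\entropicregparam_{\alpha,\beta}(x_i)$, and the regularization pieces combine to the stated $\entropicregparam K_i$. Invoking \citep[Section~3.2--3.3]{leygonie2021framework} then certifies that this pseudo-gradient on the PD side is legitimate in the chain-rule sense of the framework. The main obstacle I anticipate is the careful bookkeeping of the regularization term: one must correctly track which $\KL$ summand carries $\pers(\alpha)$ versus $\pers(\beta)$, and verify that the symmetry argument for $\FG_\entropicregparam(\alpha,\alpha)$ correctly doubles \emph{all} contributions (including the $\pers(\alpha)$ factor that appears in only one of the two $\KL$ halves when $\beta \neq \alpha$, but appears in both halves when $\beta = \alpha$). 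A small consistency check at the end would be to verify that $\nabla_\alpha \SFG_\entropicregparam(\alpha,\alpha) = 0$, which the identity manifestly satisfies.
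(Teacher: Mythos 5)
Your proposal is correct and follows essentially the same route as the paper's proof: freeze the optimal plans (envelope/Danskin reasoning), differentiate the cross- and self-terms of $\SFG_\entropicregparam$ separately using the symmetry of $P^\entropicregparam_{\alpha,\alpha}$ to handle the doubled slot, and collect the entropic contributions via $\nabla_{x_i}\|x_i-\projdiag(x_i)\|^2=(d_i-b_i)\begin{pmatrix}-1\\1\end{pmatrix}$. One bookkeeping remark: the linear $+Q_{ij}$ part of the $\KL$ also depends on $x_i$ (it equals $\pers(\alpha)$ in the half normalized by $\pers(\beta)$), giving $\pm\tfrac{1}{2}(d_i-b_i)$ contributions that your intermediate formula omits but which cancel between the cross- and self-terms — exactly the cancellation the paper records as $\nabla_{x_i}\tfrac{1}{2}\pers(\alpha)-\nabla_{x_i}\tfrac{1}{2}\pers(\alpha)$ — so the final expression is unaffected.
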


\begin{proof}
The gradient with respect to $x_i$ of $\SFG_\entropicregparam$ is given by
\begin{align*}
    &\nabla_{x_i} \left[ \sum_{{i'},j} \|x_{i'} - y_j\|_2^2 (P^\entropicregparam_{\alpha,\beta})_{{i'}j} + \sum_{i'=1}^N \|x_{i'} - \projdiag(x_{i'})\|_2^2 \left( 1 - \sum_{j=1}^M (P^\entropicregparam_{\alpha,\beta})_{{i'}j} \right) \right] \\
    - \frac{1}{2} &\nabla_{x_i} \left[ \sum_{{i'}j} \|x_{i'} - x_j\|_2^2 (P^\entropicregparam_{\alpha,\alpha})_{i' j} + \sum_{i'} \|x_{i'} - \projdiag(x_{i'})\|_2^2 \left(1 - \sum_{j=1}^M (P^\entropicregparam_{\alpha,\beta})_{{i'}j} \right) \right] \\
    + &\entropicregparam \nabla_{x_i} \varphi(x_1,\dots,x_N),
\end{align*}
where
\[\varphi(x) = \frac{1}{2} \left(\KL\left(P^\entropicregparam_{\alpha,\beta} \bigg| \frac{\mathbf{a}\mathbf{b}^T}{\pers(\alpha)} \right) + \KL\left(P^\entropicregparam_{\alpha,\beta} \bigg| \frac{\mathbf{a}\mathbf{b}^T}{\pers(\beta)}\right) \right) - \frac{1}{2} \KL\left(P^\entropicregparam_{\alpha,\alpha} \bigg| \frac{\mathbf{a}\mathbf{a}^T}{\pers(\alpha)} \right) \]
accounts for the entropic regularization terms. 
Note that $\varphi$ depends on $x$ since the column vector $\mathbf{a}$ accounts for the squared distance between the points in $\spt(\alpha)$ and their projection on the diagonal $\thediag$.

Computing the gradient of the first term yields
\begin{align*}
    &\nabla_{x_i} \left[ \sum_{i',j} \|x_{i'} - y_j\|_2^2 (P^\entropicregparam_{\alpha,\beta})_{{i'}j} + \sum_{i' =1}^N \|x_{i'} - \projdiag(x_{i'})\|_2^2 \left(1 - \sum_{j=1}^m (P^\entropicregparam_{\alpha,\beta})_{{i'}j} \right) \right] \\
    = &2 \sum_{j=1}^M (x_i - y_j) (P^\entropicregparam_{\alpha,\beta})_{ij} + 2 (x_i - \projdiag(x_i)) \left(1 - \sum_{j=1}^M (P^\entropicregparam_{\alpha,\beta})_{ij} \right) \\
    = &2x_i - 2 \left(\sum_{j=1}^M y_j (P^\entropicregparam_{\alpha,\beta})_{ij} + \projdiag(x_i) \left(1 - \sum_{j=1}^M (P^\entropicregparam_{\alpha,\beta})_{ij}\right) \right) \\
    = &2x_i - T^\entropicregparam_{\alpha,\beta}(x_i).
\end{align*}
For the second term, using the symmetry of $P^\entropicregparam_{\alpha,\alpha}$ and the above calculation, we have
\begin{align*}
    &\frac{1}{2} \nabla_{x_i} \left[ \sum_{i',j} \|x_{i'} - x_j\|_2^2 (P^\entropicregparam_{\alpha,\alpha})_{{i'}j} + \sum_{i' =1}^N \|x_{i'} - \projdiag(x_{i'})\|_2^2 \left(1 - \sum_{j=1}^M (P^\entropicregparam_{\alpha,\alpha})_{{i'}j} \right) \right] \\
    = &2x_i - T^\entropicregparam_{\alpha,\alpha}(x_i).
\end{align*}
The gradient of $\varphi$ is obtained by expanding the factors in the different $\KL$ terms and observing that $\nabla_{x_i} \|x_i - \projdiag(x_i) \|^2 = (d_i - b_i) \begin{pmatrix}-1 \\ 1 \end{pmatrix}$, resulting in 
\begin{align*}
    &\frac{1}{2} \nabla_{x_i} \log( \pers(\alpha) ) \sum_{i'j} (P^\entropicregparam_{\alpha,\beta})_{i'j}  - \nabla_{x_i} \sum_{i' j} (P^\entropicregparam_{\alpha,\beta})_{i' j}  \log( \|x_j - \projdiag(x_j)\|^2 ) \\
    &- \frac{1}{2} \nabla_{x_i} \log(\pers(\alpha)) \sum_{jk} (P^\entropicregparam_{\alpha,\alpha})_{jk} + \nabla_{x_i} \sum_{jk} (P^\entropicregparam_{\alpha,\alpha})_{jk} \log(\|x_j - \projdiag(x_j)\|^2)  \\
    &+ \nabla_{x_i} \frac{1}{2} \pers(\alpha) - \nabla_{x_i} \frac{1}{2} \pers(\alpha)) \\
    = &\left[ \frac{1}{2} \frac{d_i - b_i}{\pers(\alpha)} \left(\mathbf{M} (P^{\entropicregparam}_{\alpha,\beta})  - \mathbf{M}(P^{\entropicregparam}_{\alpha,\alpha}) \right) - \frac{2}{d_i - b_i} \left( \mathbf{M}_i (P^{\entropicregparam}_{\alpha,\beta}) - \mathbf{M}_i (P^{\entropicregparam}_{\alpha,\alpha}) \right) \right] \begin{pmatrix}-1\\1\end{pmatrix}.
\end{align*}
Putting these terms together completes the proof.
\end{proof}

\begin{remark}
Interestingly, the term $T^\entropicregparam_{\alpha,\beta} - T^\entropicregparam_{\alpha,\alpha}$ appearing in our gradient can be related to \cite[Section~4.2]{carlier2022lipschitz}, where it is shown that this quantity represents the tangent vector field for the gradient flow of the Sinkhorn divergence between probability measures (instead of persistence diagrams).
Because of the particular role played by the diagonal in our problem\footnote{Formally, it induces a \emph{spatially varying} divergence \cite[Section~2.4]{sejourne2019sinkhorn}}, we have the additional term $\entropicregparam K_i$ which acts orthogonally to the diagonal to account for our distance-weighted entropic term. 
In particular, when we take the limit as $\entropicregparam \to 0$, since $T^\entropicregparam_{\alpha,\beta} \to P_{\alpha,\beta}$ and $T^\entropicregparam_{\alpha,\alpha} \to \mathrm{id}$, we expect to retrieve the descent direction $P_{\alpha,\beta} - \mathrm{id}$ which exactly describes the pseudo-gradient \eqref{eq:grad_FG} and is the geodesic between $\alpha$ and $\beta$ for the usual $\FG(\cdot,\cdot)^{\frac{1}{2}}$ metric \citep{turner2014frechet}. This is analogous to the displacement interpolation proposed by McCann in OT literature \citep{mccann1997convexity}. 
\end{remark}

\section{Topological Node2vec}\label{sec_tn2v}
In order to utilize topological information in \ntv{}, we need to be able to extract a persistence diagram from a graph. Recall from Section \ref{sec_ph} that the Vietoris-Rips construction of the persistence diagram relies only on pairwise distances between points, for which we can easily make an analog involving the weights of edges in a graph $G = (V, E, w)$. Namely, the persistence diagram $\mathrm{PD}(G)$ of a graph $G$ is the one extracted from the Vietoris-Rips filtration $\{\mathrm{VR}_\vrparameter\}$ where each edge $(u, v) \in V \times V$ is inserted at time \[\frac{1}{(w(u,v)+\gamma)^\nu}\] for $\nu,\gamma >0$ with $\gamma$ very small ($1/\gamma^\nu$ becoming the distance between points connected by zero-weight edges in the graph). This ensures that vertices $u,v$ connected by a strong (resp. weak) edge in the graph $w(u,v)$ are regarded as close (resp. far) in the view of the Vietoris-Rips filtration.

\begin{definition}
    For a \ntv{} model with input graph $G$ and parameters $\Theta = (W_1, W_2)$, we define the topological loss term 
    \begin{equation}\label{topological loss function}
        L_1(G,\Theta)=\SFG_\entropicregparam(\mathrm{PD}(\mathrm{Emb}(\Theta)),\mathrm{PD}(G)). 
    \end{equation}
    Recall from Definition \ref{def_structure} that $\mathrm{Emb}(\Theta)$ is the matrix $W_1$ with its rows viewed as Euclidean points. 
    It follows immediately that the gradient of $L_1$ with respect to any coordinate of $W_2$ is zero.
\end{definition}

We are now ready to state the gradient of the topological loss function $L_1(\Theta, G)$ with respect to $\Theta$, or, as clarified above, with respect to the points $p$ in the current embedding $\mathrm{Emb}(\Theta)$.

\begin{theorem}[Gradient of Topological Loss Term]\label{theorem_main}
Let $\alpha = \mathrm{PD}(\mathrm{Emb}(\Theta))$ and $\beta \subset \Omega$.
Let $\zeta_\entropicregparam$ be the gradient of $\alpha \mapsto \SFG_\entropicregparam(\alpha,\beta)$ 
as provided by \eqref{eq:grad_SFG}. 
Let $p\in\mathrm{Emb}(\Theta)$. 
We have
\begin{align*}\frac{\partial L_1}{\partial p}=\sum_{x\in\mathrm{PD}(\mathrm{Emb}(\Theta))}\zeta_\entropicregparam(x)\cdot\dfrac{\partial x}{\partial p},
\end{align*}
where the partial derivative $\dfrac{\partial x}{\partial p}$ is provided by \cref{continuation_lemma}.
\end{theorem}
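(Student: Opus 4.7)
The plan is to recognize $L_1$ as a composition and then apply the chain rule framework of Leygonie et al.\ \citep{leygonie2021framework} that is explicitly invoked in Section \ref{sec_gradients_top}. Concretely, we can factor
\[ L_1 : \Theta \;\longmapsto\; \mathrm{Emb}(\Theta) \;\longmapsto\; \mathrm{PD}(\mathrm{Emb}(\Theta)) \;\longmapsto\; \SFG_\entropicregparam(\mathrm{PD}(\mathrm{Emb}(\Theta)), \mathrm{PD}(G)). \]
The second arrow is the Vietoris-Rips persistence map applied to the point cloud $\mathrm{Emb}(\Theta) = \{W_1(i,\cdot)\}_i$, and the third arrow is the fixed-target map $\alpha \mapsto \SFG_\entropicregparam(\alpha, \beta)$ with $\beta = \mathrm{PD}(G)$.

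I would first assume the embedding sits in Vietoris-Rips general position so that each generator $x = (b_x, d_x)$ is supported by a unique pair of critical edges, which is the situation in which \cref{continuation_lemma} applies and provides the partial derivatives $\partial x / \partial p$. Next, I would identify the outer gradient: the proposition immediately above the theorem gives exactly the pseudo-gradient $\zeta_\entropicregparam$ of $\alpha \mapsto \SFG_\entropicregparam(\alpha, \beta)$ in the sense of \citep{leygonie2021framework}, viewing $\alpha$ through the parametrization $(x_1,\dots,x_N) \mapsto \sum_i \delta_{x_i}$ so that it lives in $(\mathbb{R}^2)^N$ and $\zeta_\entropicregparam(x_i) \in \mathbb{R}^2$ has a well-defined meaning.

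The chain rule then reads, for each $p \in \mathrm{Emb}(\Theta)$,
\[ \frac{\partial L_1}{\partial p} = \sum_{x \in \mathrm{PD}(\mathrm{Emb}(\Theta))} \zeta_\entropicregparam(x) \cdot \frac{\partial x}{\partial p}, \]
which is the claimed formula. The only nontrivial observation is that, although the intermediate parametrization of the PD as a tuple depends on an arbitrary ordering of generators and on an arbitrary choice of which critical edge realizes a given birth or death, the composite is independent of those choices: any reordering permutes the summands, while the critical-edge ambiguity is excluded by the general-position assumption. This invariance is precisely the content of \citep[Sections 3.2--3.3]{leygonie2021framework}, already cited in Section \ref{sec_gradients_top}, and is what allows the formal pseudo-gradient to be composed with the smooth VR map gradient of \cref{continuation_lemma}.

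The main obstacle is this last point: rigorously justifying the chain rule for a non-smooth map like $\mathrm{PD}$. I would handle it by citing \citep{leygonie2021framework} directly and noting that, under VR general position, $\mathrm{PD}$ is $C^1$ as a map $(\mathbb{R}^m)^n \to (\mathbb{R}^2)^N$ once the critical pairs are fixed (by \cref{continuation_lemma}), and $\alpha \mapsto \SFG_\entropicregparam(\alpha,\beta)$ is differentiable in the tuple parametrization because $P^\entropicregparam_{\alpha,\beta}$ depends smoothly on $\alpha$ (as noted after Definition \ref{def_hurot}). Thus both pieces are honestly differentiable and the classical chain rule applies; the persistence-theoretic wrapper is only needed to certify that the resulting expression is well-defined independent of parametrization. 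No extra calculation is required beyond assembling these ingredients.
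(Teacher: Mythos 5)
Your proposal is correct and follows essentially the same route as the paper: both factor $L_1$ through $\mathrm{Emb}(\Theta) \mapsto \mathrm{PD}(\mathrm{Emb}(\Theta)) \mapsto \SFG_\entropicregparam(\cdot,\beta)$ and apply the chain rule of \citep{leygonie2021framework}, composing the gradient $\zeta_\entropicregparam$ from \eqref{eq:grad_SFG} with the Vietoris-Rips derivatives of \cref{continuation_lemma}. Your additional remarks on general position and parametrization-independence merely spell out what the paper delegates to \cref{sec_gradients_top} and the cited framework.
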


\begin{proof} From the chain rule of \citep{leygonie2021framework} (see \cref{sec_gradients_top}) and \cref{eq:grad_SFG} it follows that
\begin{align*} \dfrac{\partial L_1}{\partial p} &= \sum_{x \in \mathrm{PD}(\mathrm{Emb}(\Theta))} \dfrac{\partial \SFG_\entropicregparam}{\partial x} \cdot \dfrac{\partial x}{\partial p} \\
&= \sum_{x \in \mathrm{PD}(\mathrm{Emb}(\Theta))} \zeta_\entropicregparam(x) \cdot \dfrac{\partial x}{\partial p}.
\end{align*}
\end{proof}

We can now train \ntv{} using a loss
\[ L(\Theta) = \lambda_0 L_0(T, \Theta) + \lambda_1 L_1(G, \Theta) \]
for some parameters $\lambda_0,\lambda_1 \geq 0$, using a standard gradient descent framework with a learning rate $\eta > 0$. Let $\Theta(k)$ denote the parameters after $k \in \mathbb{Z}_{\geq 0}$ steps of gradient descent updates. Then the next epoch's parameters are given by
\begin{equation}\label{eq_new_gradient}
\Theta(k+1):=\Theta(k) - \eta\left(\lambda_0
\nabla_{\Theta(k)} L_0(T(k),\Theta(k))
+\lambda_1\nabla_{\Theta(k)}L_1(G,\Theta(k))\right)
\end{equation}
with individual gradients given by Proposition \ref{prop_ce_gradient} and Theorem \ref{theorem_main}. We summarize this in Algorithm \ref{alg_topnode2vec}. 

\begin{algorithm}
\caption{Topological Node2vec algorithm}\label{alg_topnode2vec}
\begin{algorithmic}
    \INPUT a weighted graph $G=(V,E,w)$, learning rates $\eta, \lambda_0,\lambda_1$, embedding dimension $m$, neighborhood parameters $(l,r,p,q)$, entropic regularization parameter $\entropicregparam > 0$;
    \OUTPUT $\mathbb{P}\subset\mathbb{R}^m$ with $|\mathbb{P}|=|V|$
    \start
    \State • initialize the parameter matrices $\Theta(0)$; let $k=0$;
    \State • compute the persistence diagram $\mathrm{PD}(G)$
    \While{not converged}
    \State • compute the persistence diagram $\mathrm{PD}(\mathrm{Emb}(\Theta(k)))$
    \State • for each node $v\in V$, generate the \emph{current neighborhood vector} $C_v$ and the \emph{training neighborhood vector} $T_v$; let $C(k) = \{C_v(k)\}_{v \in V}$ and $T(k):=\{T_v(k)\}_{v\in V}$
    \State • update $\Theta(k+1)$ using \eqref{eq_new_gradient}
    \State • $k\mathrel{+}=1$
    \EndWhile
\end{algorithmic}
\end{algorithm}

\section{Numerical Experiments}\label{sec_experiments}

In this section, we elaborate on two synthetic experiments designed to demonstrate what we gain over the base \ntv{} model by including our topological loss function \eqref{topological loss function}. 

Our code is publicly available at this \href{https://github.com/killianfmeehan/topological_node2vec}{repository}. 
Its implementation provides both CPU and GPU backend. The CPU-backend relies on \texttt{Gudhi} \citep{maria2014gudhi}, while the GPU-backend is based on a fork of \texttt{Ripser++} \citep{bauer2021ripser,zhang2020gpu} where we adapted the code in order to access the correspondence between a generator $x$ in the PD and the points in the point cloud responsible for the birth edge and death edge.
For an in-depth examination of all the hyperparameters of this network (both those related to original \ntv{} as well as our topological additions), please see the readme file and examples notebook in the 
repository.

\subsection{Experiment: $S^1$ Made of 8 Smaller $S^1$s}
\begin{figure}
    \centering
    \includegraphics[width=0.25\textwidth]{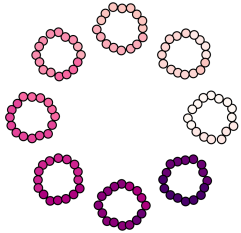}
    \caption{Euclidean data for the synthetic experiment, from which we derive a weighted adjacency matrix to input into (Topological) Node2vec. This data set demonstrates overt topological features on two scales: the eight smaller cycles and the larger emergent cycle formed by their arrangement.}
    \label{fig_s1x8_original}
\end{figure}

We look at a dataset consisting of 8 circles each with 16 points arranged radially in a larger circle, seen in Figure \ref{fig_s1x8_original}. While the points and the sampled circles themselves are distributed uniformly, each point is wiggled by some small random noise, ensuring that the input data satisfies Vietoris-Rips general position (which is guaranteed if all pairwise distances are unique).

\textbf{Node2vec neighborhood generation.}
We first demonstrate that, as shown in Figure \ref{fig_nbhd_params}, for this data set it is best to forego the random neighborhood generation of \ntv{} and simply use as neighborhoods the columns of the adjacency matrix corresponding to each vertex. (See Remark \ref{rmk_r_infinity}.)


\begin{figure}[t]
     \centering
     \begin{subfigure}[t]{0.17\textwidth}
         \centering
         \includegraphics[width=\textwidth]{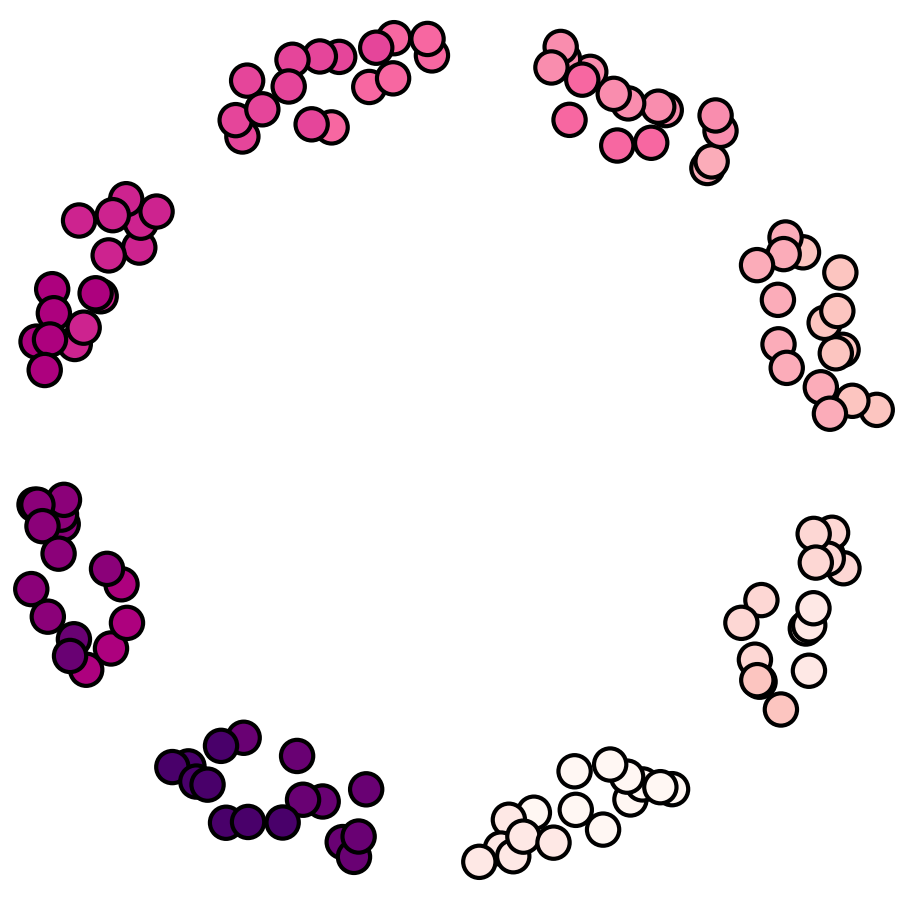}
         \caption{$l=1, r=8$}
     \end{subfigure}
     \hfill
     \begin{subfigure}[t]{0.17\textwidth}
         \centering
         \includegraphics[width=\textwidth]{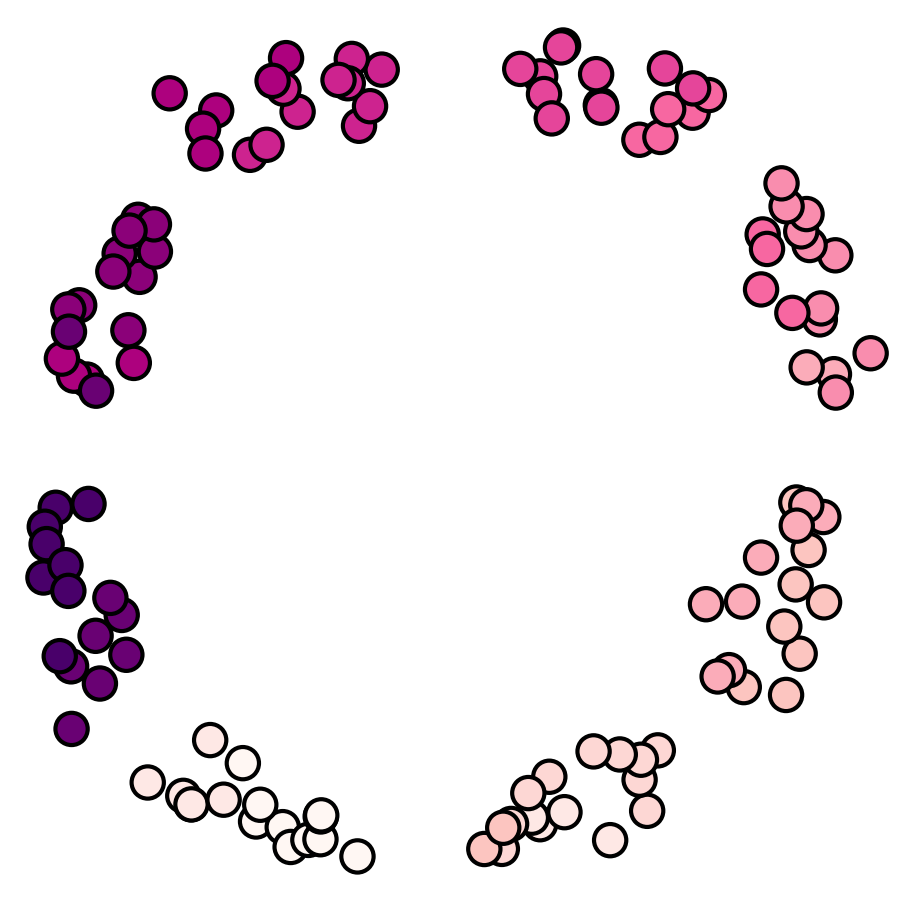}
         \caption{$l=2, r=4$}
     \end{subfigure}
     \hfill
     \begin{subfigure}[t]{0.17\textwidth}
         \centering
         \includegraphics[width=\textwidth]{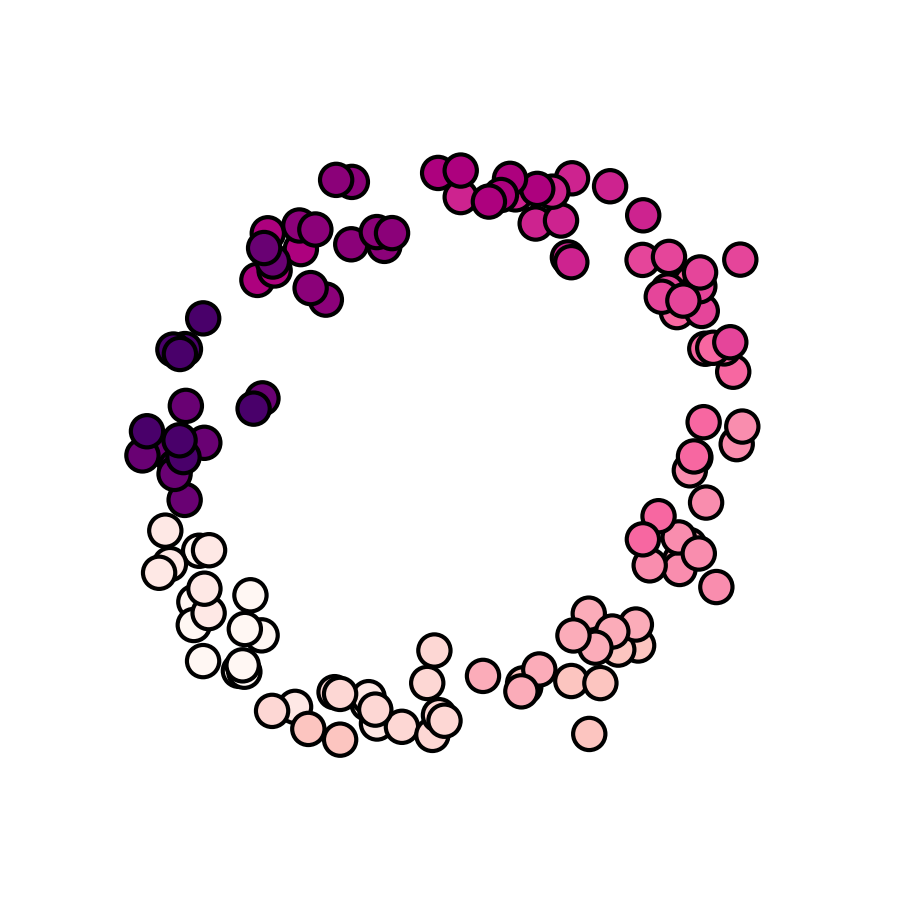}
         \caption{linear interpolation from $l=1,r=8$ to $l=8,r=1$}
     \end{subfigure}
     \hfill
     \begin{subfigure}[t]{0.17\textwidth}
         \centering
         \includegraphics[width=\textwidth]{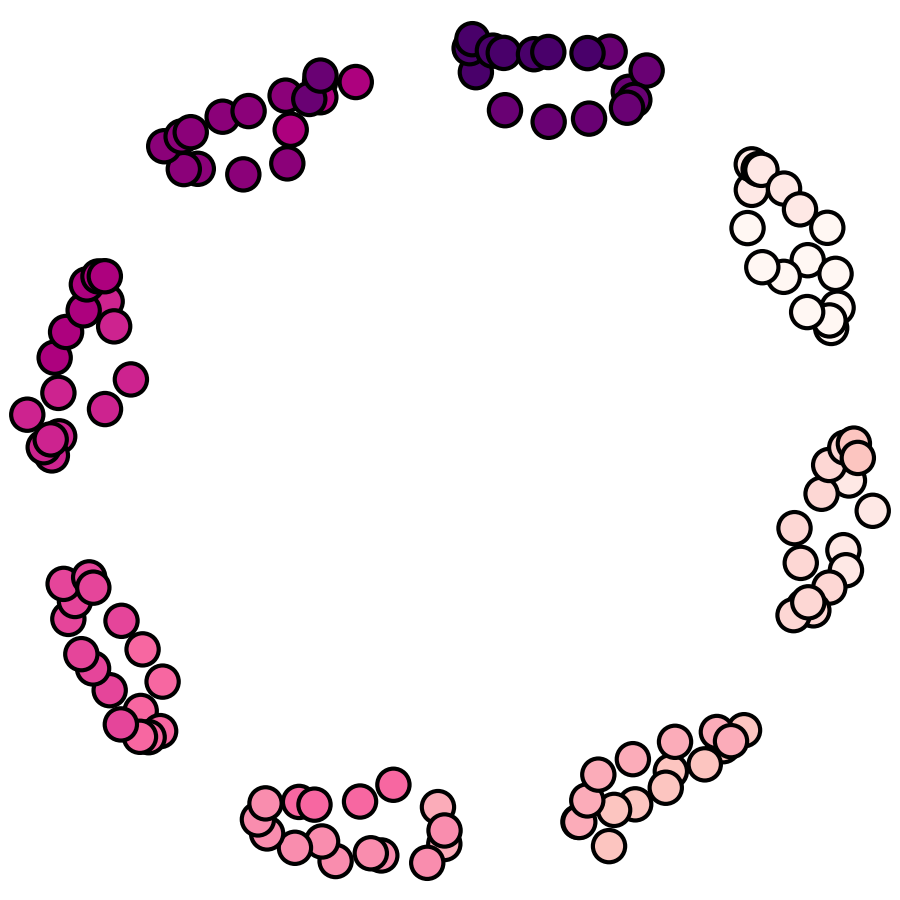}
         \caption{$l=1, r=16$}
     \end{subfigure}
     \hfill
     \begin{subfigure}[t]{0.17\textwidth}
         \centering
         \includegraphics[width=\textwidth]{images/nbhd_params_full.png}
         \caption{$l=1, r=\infty$, as in Remark \ref{rmk_r_infinity}}
     \end{subfigure}
    \caption{Various \emph{non-topological} \texttt{Node2vec} embeddings of the point cloud in Figure \ref{fig_s1x8_original} (specifically the reciprocal of its pairwise distance matrix) trained with different neighborhood parameters. Even in the best case, original \ntv{} dramatically erases topological information. (a\,--\,d) We see that $l=1$ and $r$ large give the best results. (e) This embedding was done using full-column vectors for neighborhood information.}
    \label{fig_nbhd_params}
\end{figure}

\textbf{Minibatches are important outside of computation time.} Upon introduction of the topological loss function \eqref{topological loss function}, we immediately encounter a novel problem. When identifying some topological feature in the embedding and matching it to a \emph{smaller} feature in the original data (something that is closer to the diagonal), the gradient update of the topological loss function \eqref{topological loss function} \emph{expands} the birth edge and \emph{shrinks} the death edge of the feature. (Recall Figure \ref{grad_pc_movement}.) However, the points that dictated the birth and death values of this feature are now all but guaranteed to be the same determining points in the next step of the network, causing a repeat of the same movement on the same points, leaving the rest of the data set untouched and the embedding progressively distorted. Taking properly sized \emph{minibatches} (subsampling the data at each step of the network) can completely remove this problem, as the points determining the birth or death of such a generator are likely to change from one step to the next. \cref{fig_minibatch} demonstrates this in detail. 

\begin{figure}[t]
     \centering
     \begin{subfigure}[t]{0.17\textwidth}
         \centering
         \includegraphics[width=\textwidth]{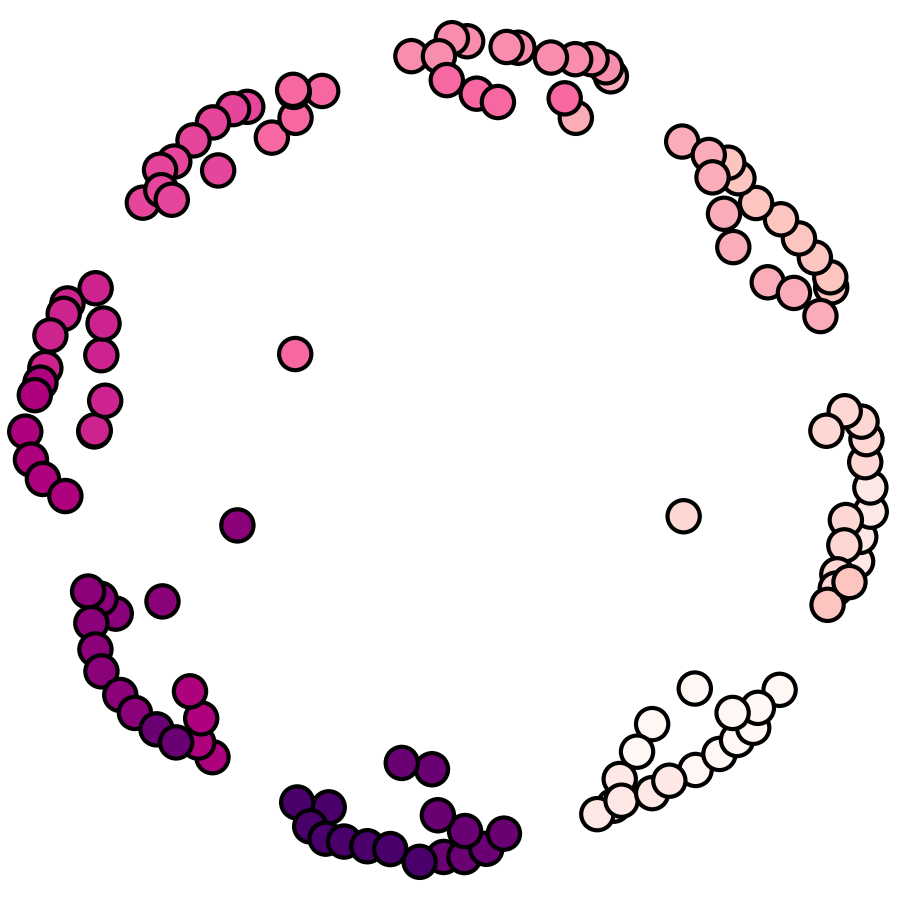}
         \caption{No minibatching: 100\% of data used at every epoch.}
     \end{subfigure}
     \hfill
     \begin{subfigure}[t]{0.17\textwidth}
         \centering
         \includegraphics[width=\textwidth]{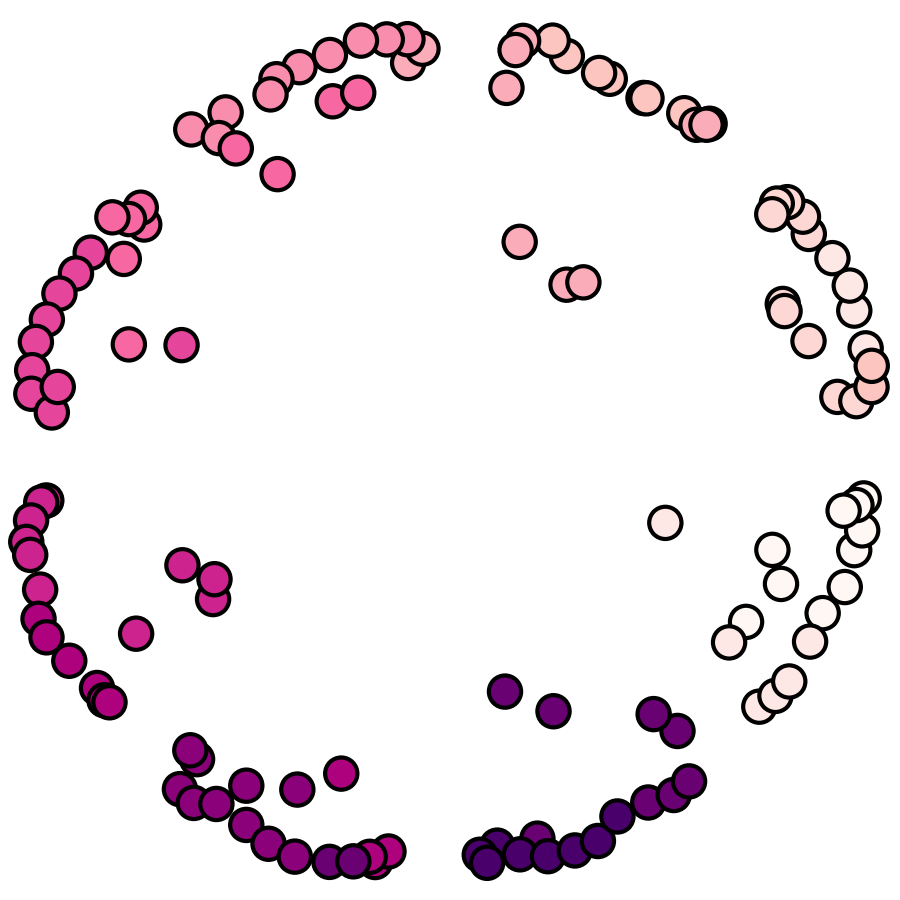}
         \caption{Minibatches of 75\% of data.}
     \end{subfigure}
     \hfill
     \begin{subfigure}[t]{0.17\textwidth}
         \centering
         \includegraphics[width=\textwidth]{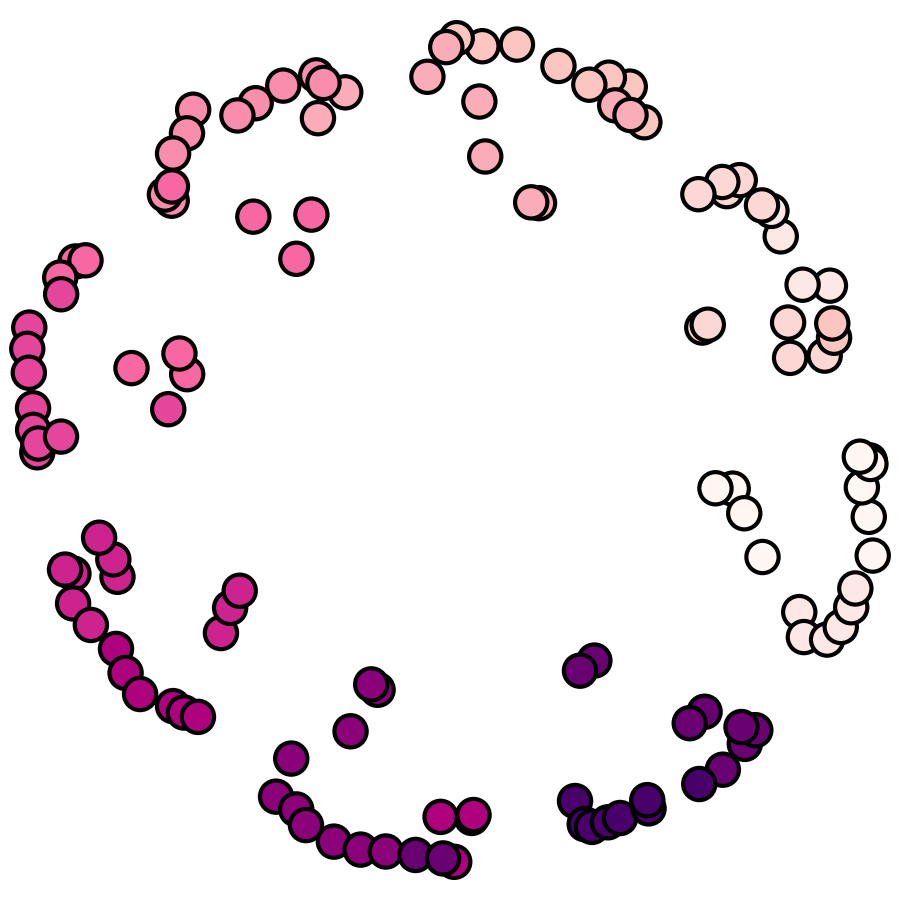}
         \caption{Minibatches of 50\% of data.}
     \end{subfigure}
     \hfill
     \begin{subfigure}[t]{0.17\textwidth}
         \centering
         \includegraphics[width=\textwidth]{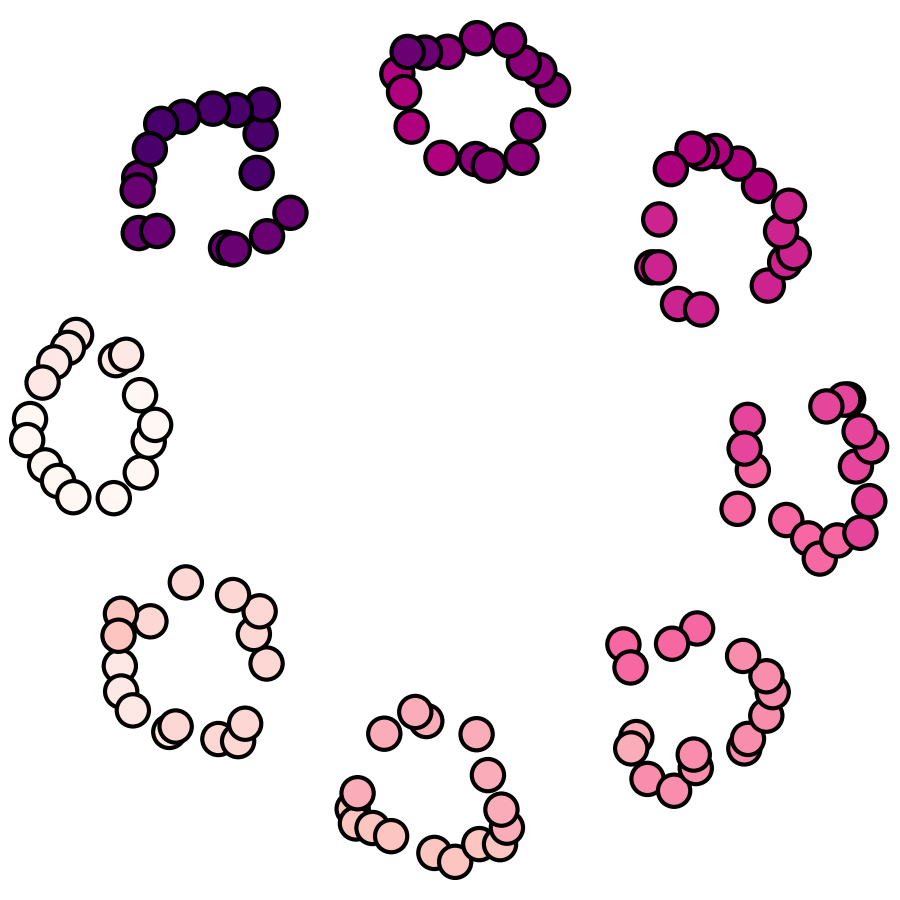}
         \caption{Minibatches of 25\% of data.}
     \end{subfigure}
     \hfill
     \begin{subfigure}[t]{0.17\textwidth}
         \centering
         \includegraphics[width=\textwidth]{images/s1x8_mbs_testing_graded.png}
         \caption{Minibatches graded from 25\% of data at the beginning of the embedding process to 100\% at the end.}
     \end{subfigure}
    \caption{\textbf{\emph{Topological Node2vec}} embeddings demonstrating the topological loss function's interaction with \emph{minibatch} size. (a\,--\,d) As minibatches decrease in size, the inward distortion of the points that provide the death value of the inner topological feature also decreases. (e) Testing for potential improvement, we performed another embedding beginning with 25\% minibatches and then linearly increasing to 100 \% as the model continued to train. All other parameters are fixed across these five experiments.}
    \label{fig_minibatch}
\end{figure}

\textbf{Results.}
Comparing the best results from original \ntv{} in Figure \ref{fig_nbhd_params} (e) and the best results from \emph{Topological Node2vec} in Figure \ref{fig_minibatch} (e), we can starkly see the information loss in base \ntv{} as well as our recovery of it via the topological loss function \eqref{topological loss function}.

\begin{figure}[t]
\captionsetup[subfigure]{justification=centering}
     \centering
     \begin{subfigure}[t]{0.30\textwidth}
         \centering
         \includegraphics[width=\textwidth]{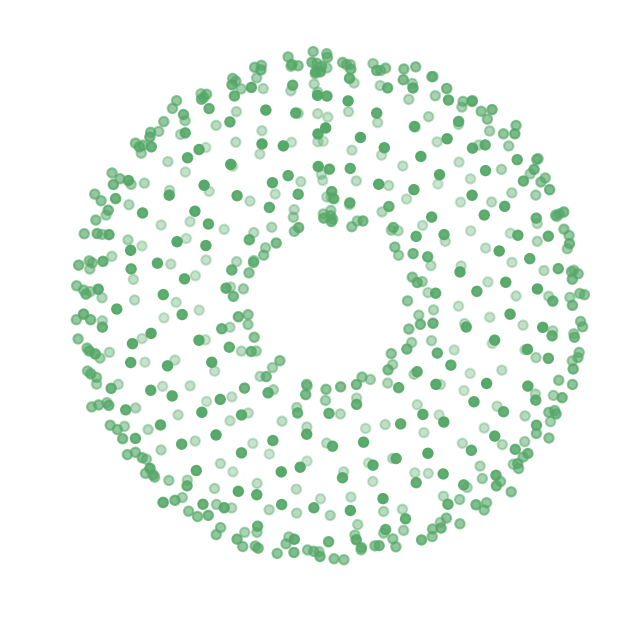}
         \caption{}
     \end{subfigure}
     \begin{subfigure}[t]{0.30\textwidth}
         \centering
         \includegraphics[width=\textwidth]{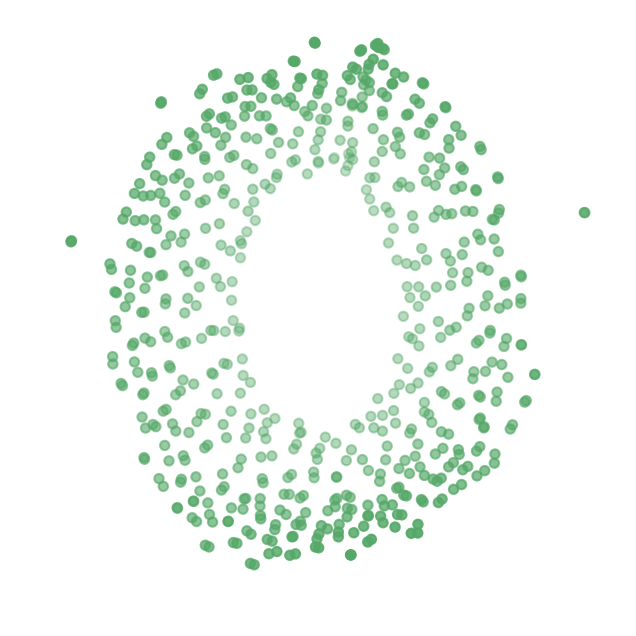}
         \caption{}
     \end{subfigure}
     \begin{subfigure}[t]{0.30\textwidth}
         \centering
         \includegraphics[width=\textwidth]{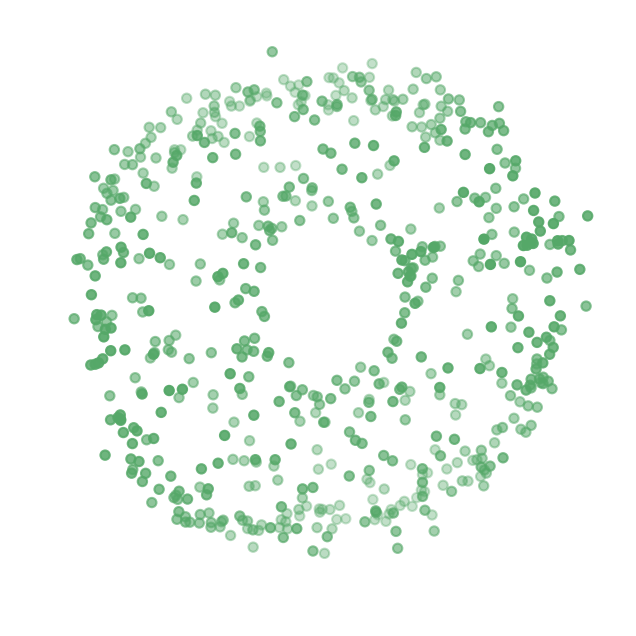}
         \caption{}
     \end{subfigure}
     \begin{subfigure}[t]{0.30\textwidth}
         \centering
         \includegraphics[width=\textwidth]{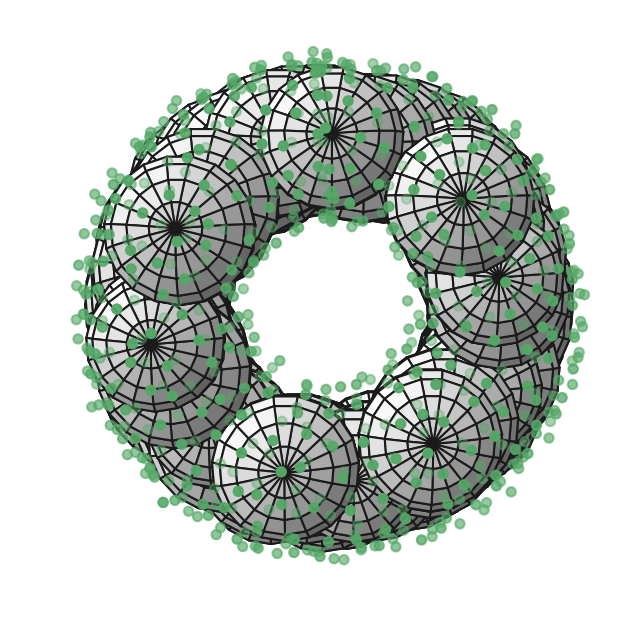}
         \caption{}
     \end{subfigure}
     \begin{subfigure}[t]{0.30\textwidth}
         \centering
         \includegraphics[width=\textwidth]{images/n2v_with.png}
         \caption{}
     \end{subfigure}
     \begin{subfigure}[t]{0.30\textwidth}
         \centering
         \includegraphics[width=\textwidth]{images/top_with.png}
         \caption{}
     \end{subfigure}
     \begin{subfigure}[t]{0.825\textwidth}
         \centering
         \includegraphics[width=\textwidth]{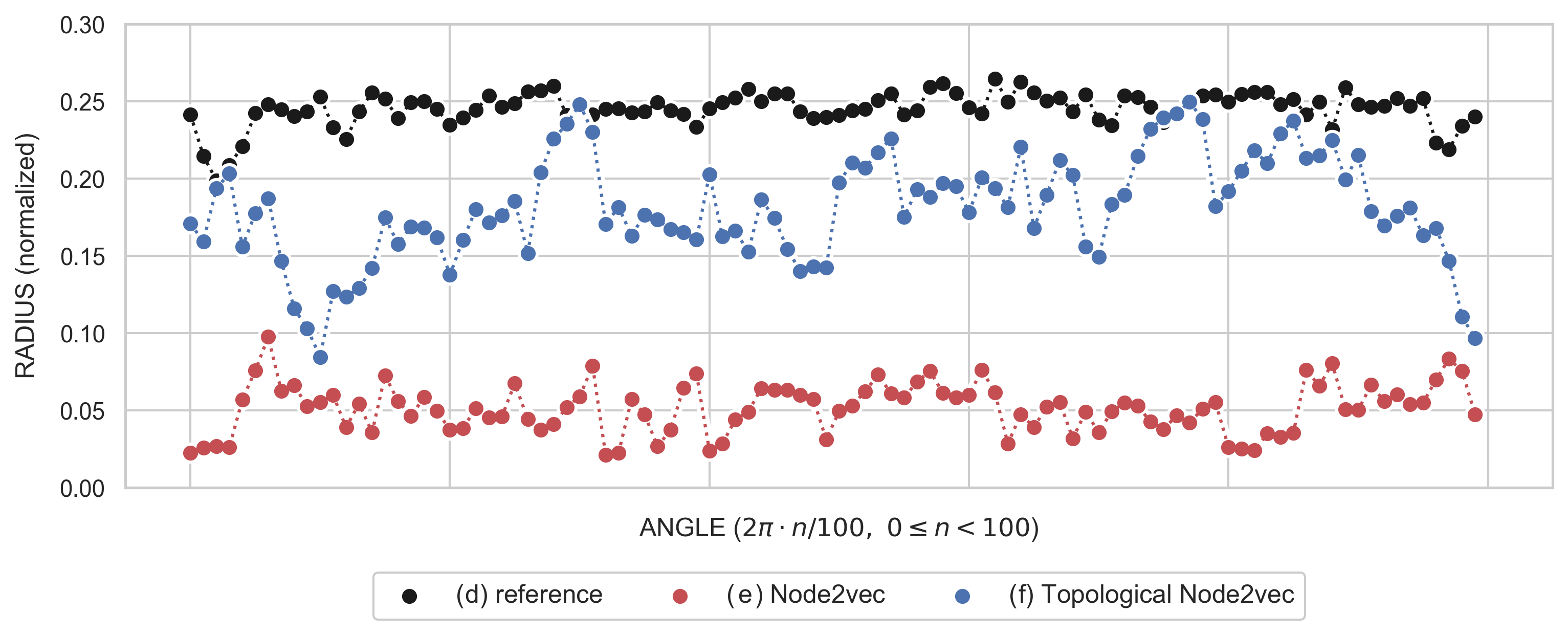}
         \caption{}
     \end{subfigure}
    \caption{3D render of various point clouds with largest inscribable spheres drawn inside. (a,d) Synthetic data created for this experiment. (b,e) Best embedding proposed by \ntv{} over various hyper-parameters. (c,f) Best embedding proposed by \emph{Topological Node2vec} over extensive space of hyper-parameters.  (g) A plot of the radii of all largest inscribable spheres drawn inside the preceding tori. In these examples, one hundred evenly distributed angles are taken, and all radii have been normalized by the final diameter of their corresponding embedding.}
    \label{fig_torus_experiments}
\end{figure}

\subsection{Experiment: Torus}\label{subsection_torus}
We sample a torus by arranging points uniformly over the surface. (These are not uniformly \emph{sampled} from a distribution, but rather a constructed covering. See the demonstration notebook in the \href{https://github.com/killianfmeehan/topological_node2vec}{repository} for the precise details.) Once again, in order to ensure Vietoris-Rips general position, we perturb all points by a small random vector.

The smooth torus has two independent one-dimensional topological features, representing one horizontal loop around the torus and one vertical one. The torus also has a single two-dimensional feature that represents the hollow interior. We benefit here from a very small minibatch size, eventually settling on 6.25\% of the full data set, constant across all epochs.

\textbf{Visualization.}
To visualize our embedded tori, we project the results into the first two principal components with the third principal component corresponding to distance from the viewer.

Visualizing the reconstruction of the two-dimensional topological feature (the hollow tube) is vital to judging the success of any torus embedding. To draw the `largest' inscribable spheres as in Figure \ref{fig_torus_experiments}, in increments of very small angles we take rectangular slices of the torus widened from the ray that starts at the torus' center and extends in the direction of the current angle. Averaging all points on this rectangle, we obtain a central point $c$. We then inscribe the largest possible sphere at that center.

\textbf{Results.}
We note the dramatic loss of topology in \ntv{} and equally dramatic recovery of such features with the inclusion of the topological loss function \eqref{topological loss function} demonstrated in Figure \ref{fig_torus_experiments}.

\section{Discussion}
The most obvious limiting factor for applying this model is the computation time of acquiring the persistence diagram of the proposed embedding at every epoch. Fortunately, in our synthetic examples, it is almost always the case that the best embeddings come out of very small minibatch sizes, which also result in dramatically reduced computation times.
As PD computation becomes further optimized, the model will benefit tremendously.

Since the generating point cloud (and thus the resulting persistence diagram) is only very marginally altered from one epoch to the next, this project seems potentially suitable for application of the work in \citep{vineyards} which may dramatically reduce computation time after generating the initial persistence diagram. However, this conflicts (at least at first glance) with subsampling minibatches at every epoch, which causes us to have largely unrelated PDs from one epoch to the next.

Part of the motivation for this project is the need for improved analysis of chromatin conformation capture data obtained from methods such as Hi-C and Pore-C \citep{ulahannan2019nanopore}.
These data represent the frequency of all two-contacts (Hi-C) or multi-contacts (Pore-C) among DNA segments and can be considered as graphs (Hi-C) or hypergraphs (Pore-C).
It is biologically known that chromatin conformation is dynamically deformed and forms first-order and second-order topological structures to regulate gene expression in the cell differentiation process.
Reconstructing three-dimensional topological structures of DNA from these data is a crucial point of current cell research, and can be regarded as a problem of graph (or hypergraph) embeddings. Our demonstrations in this document make it clear that conventional data analysis using \texttt{Node2vec}-based methods are unlikely to capture chromatin conformation precisely due to the eradication of topology, and \emph{Topological Node2vec} hopes to resolve this.
The validation of Topological Node2vec as applied to epigenome data science is the primary concern of our future work.



\acks{This work was sponsored by JSPS Grant-in-Aid for Transformative Research Areas (A) (no. 22H05107 to Y. Hiraoka), JST MIRAI Program Grant (no. 22682401 to Y. Hiraoka),  JST PRESTO (no. JPMJPR2021 to Y. Imoto), AAP CNRS INS2I 2022 (to T. Lacombe), and JSPS Grant-in-Aid for Early-Career Scientists (no. 21K13822, to T. Yachimura), ASHBi Fusion Research Program (to Y. Imoto and K. Meehan).}

\bibliographystyle{abbrvnat}

\bibpunct{(}{)}{;}{a}{,}{,}

\bibliography{bibliography}

\begin{thebibliography}{64}
\providecommand{\natexlab}[1]{#1}
\providecommand{\url}[1]{\texttt{#1}}
\expandafter\ifx\csname urlstyle\endcsname\relax
  \providecommand{\doi}[1]{doi: #1}\else
  \providecommand{\doi}{doi: \begingroup \urlstyle{rm}\Url}\fi

\bibitem[Altschuler et~al.(2017)Altschuler, Niles-Weed, and
  Rigollet]{altschuler2017near}
J.~Altschuler, J.~Niles-Weed, and P.~Rigollet.
\newblock Near-linear time approximation algorithms for optimal transport via
  sinkhorn iteration.
\newblock \emph{Advances in neural information processing systems}, 30, 2017.

\bibitem[Bauer(2021)]{bauer2021ripser}
U.~Bauer.
\newblock Ripser: efficient computation of vietoris--rips persistence barcodes.
\newblock \emph{Journal of Applied and Computational Topology}, 5\penalty0
  (3):\penalty0 391--423, 2021.

\bibitem[Benamou et~al.(2015)Benamou, Carlier, Cuturi, Nenna, and
  Peyr{\'e}]{benamou2015iterative}
J.-D. Benamou, G.~Carlier, M.~Cuturi, L.~Nenna, and G.~Peyr{\'e}.
\newblock Iterative bregman projections for regularized transportation
  problems.
\newblock \emph{SIAM Journal on Scientific Computing}, 37\penalty0
  (2):\penalty0 A1111--A1138, 2015.

\bibitem[Bojchevski and Günnemann(2018)]{deep-gaussian}
A.~Bojchevski and S.~Günnemann.
\newblock Deep gaussian embedding of graphs: Unsupervised inductive learning
  via ranking, 2018.

\bibitem[Br{\"u}el-Gabrielsson et~al.(2020)Br{\"u}el-Gabrielsson,
  Ganapathi-Subramanian, Skraba, and
  Guibas]{bruel2020topologyAwareSurfaceReconstruction}
R.~Br{\"u}el-Gabrielsson, V.~Ganapathi-Subramanian, P.~Skraba, and L.~J.
  Guibas.
\newblock Topology-aware surface reconstruction for point clouds.
\newblock In \emph{Computer Graphics Forum}, volume~39, pages 197--207. Wiley
  Online Library, 2020.

\bibitem[Cao et~al.(2015)Cao, Lu, and Xu]{grarep}
S.~Cao, W.~Lu, and Q.~Xu.
\newblock Grarep: Learning graph representations with global structural
  information.
\newblock In \emph{Proceedings of the 24th ACM International on Conference on
  Information and Knowledge Management}, CIKM '15, page 891–900, New York,
  NY, USA, 2015. Association for Computing Machinery.
\newblock ISBN 9781450337946.
\newblock \doi{10.1145/2806416.2806512}.
\newblock URL \url{https://doi.org/10.1145/2806416.2806512}.

\bibitem[Cao and Monod(2022)]{cao2022approximating}
Y.~Cao and A.~Monod.
\newblock Approximating persistent homology for large datasets.
\newblock \emph{arXiv preprint arXiv:2204.09155}, 2022.

\bibitem[Carlier et~al.(2022)Carlier, Chizat, and
  Laborde]{carlier2022lipschitz}
G.~Carlier, L.~Chizat, and M.~Laborde.
\newblock Lipschitz continuity of the schrödinger map in entropic optimal
  transport.
\newblock \emph{arXiv preprint arXiv:2210.00225}, 2022.

\bibitem[Carriere et~al.(2021)Carriere, Chazal, Glisse, Ike, Kannan, and
  Umeda]{optimizing}
M.~Carriere, F.~Chazal, M.~Glisse, Y.~Ike, H.~Kannan, and Y.~Umeda.
\newblock Optimizing persistent homology based functions.
\newblock In M.~Meila and T.~Zhang, editors, \emph{Proceedings of the 38th
  International Conference on Machine Learning}, volume 139 of
  \emph{Proceedings of Machine Learning Research}, pages 1294--1303. PMLR,
  18--24 Jul 2021.
\newblock URL \url{https://proceedings.mlr.press/v139/carriere21a.html}.

\bibitem[Chazal et~al.(2016)Chazal, De~Silva, Glisse, and
  Oudot]{chazal2016structure}
F.~Chazal, V.~De~Silva, M.~Glisse, and S.~Oudot.
\newblock \emph{The structure and stability of persistence modules}, volume~10.
\newblock Springer, 2016.

\bibitem[Chen et~al.(2019)Chen, Ni, Bai, and
  Wang]{chen2019topologicalRegularizer}
C.~Chen, X.~Ni, Q.~Bai, and Y.~Wang.
\newblock A topological regularizer for classifiers via persistent homology.
\newblock In \emph{The 22nd International Conference on Artificial Intelligence
  and Statistics}, pages 2573--2582. PMLR, 2019.

\bibitem[Cohen-Steiner et~al.(2005)Cohen-Steiner, Edelsbrunner, and
  Harer]{cohen2005stability}
D.~Cohen-Steiner, H.~Edelsbrunner, and J.~Harer.
\newblock Stability of persistence diagrams.
\newblock In \emph{Proceedings of the twenty-first annual symposium on
  Computational geometry}, pages 263--271, 2005.

\bibitem[Cohen-Steiner et~al.(2006)Cohen-Steiner, Edelsbrunner, and
  Morozov]{vineyards}
D.~Cohen-Steiner, H.~Edelsbrunner, and D.~Morozov.
\newblock Vines and vineyards by updating persistence in linear time.
\newblock In \emph{Proceedings of the Twenty-Second Annual Symposium on
  Computational Geometry}, SCG '06, page 119–126, New York, NY, USA, 2006.
  Association for Computing Machinery.
\newblock ISBN 1595933409.
\newblock \doi{10.1145/1137856.1137877}.
\newblock URL \url{https://doi.org/10.1145/1137856.1137877}.

\bibitem[Cohen-Steiner et~al.(2010)Cohen-Steiner, Edelsbrunner, Harer, and
  Mileyko]{cohen2010lipschitz}
D.~Cohen-Steiner, H.~Edelsbrunner, J.~Harer, and Y.~Mileyko.
\newblock Lipschitz functions have l p-stable persistence.
\newblock \emph{Foundations of computational mathematics}, 10\penalty0
  (2):\penalty0 127--139, 2010.

\bibitem[Cui et~al.(2018)Cui, Wang, Pei, and Zhu]{cui2018survey}
P.~Cui, X.~Wang, J.~Pei, and W.~Zhu.
\newblock A survey on network embedding.
\newblock \emph{IEEE transactions on knowledge and data engineering},
  31\penalty0 (5):\penalty0 833--852, 2018.

\bibitem[Cuturi(2013)]{cuturi}
M.~Cuturi.
\newblock Sinkhorn distances: Lightspeed computation of optimal transport.
\newblock In C.~Burges, L.~Bottou, M.~Welling, Z.~Ghahramani, and
  K.~Weinberger, editors, \emph{Advances in Neural Information Processing
  Systems}, volume~26. Curran Associates, Inc., 2013.
\newblock URL
  \url{https://proceedings.neurips.cc/paper/2013/file/af21d0c97db2e27e13572cbf59eb343d-Paper.pdf}.

\bibitem[Divol and Lacombe(2021{\natexlab{a}})]{Divol2021}
V.~Divol and T.~Lacombe.
\newblock Understanding the topology and the geometry of the space of
  persistence diagrams via optimal partial transport.
\newblock \emph{Journal of Applied and Computational Topology}, 5, 3
  2021{\natexlab{a}}.
\newblock ISSN 23671734.
\newblock \doi{10.1007/s41468-020-00061-z}.

\bibitem[Divol and Lacombe(2021{\natexlab{b}})]{divol2021estimation}
V.~Divol and T.~Lacombe.
\newblock Estimation and quantization of expected persistence diagrams.
\newblock In \emph{International Conference on Machine Learning}, pages
  2760--2770. PMLR, 2021{\natexlab{b}}.

\bibitem[Dvurechensky et~al.(2018)Dvurechensky, Gasnikov, and
  Kroshnin]{dvurechensky2018computational}
P.~Dvurechensky, A.~Gasnikov, and A.~Kroshnin.
\newblock Computational optimal transport: Complexity by accelerated gradient
  descent is better than by sinkhorn’s algorithm.
\newblock In \emph{International conference on machine learning}, pages
  1367--1376. PMLR, 2018.

\bibitem[Edelsbrunner and Harer(2022)]{edelsbrunner2022computational}
H.~Edelsbrunner and J.~L. Harer.
\newblock \emph{Computational topology: an introduction}.
\newblock American Mathematical Society, 2022.

\bibitem[Edelsbrunner et~al.(2000)Edelsbrunner, Letscher, and
  Zomorodian]{persistence_and_simplification}
H.~Edelsbrunner, D.~Letscher, and A.~Zomorodian.
\newblock Topological persistence and simplification.
\newblock In \emph{Proceedings 41st Annual Symposium on Foundations of Computer
  Science}, pages 454--463, 2000.
\newblock \doi{10.1109/SFCS.2000.892133}.

\bibitem[Feydy et~al.(2019)Feydy, S{\'e}journ{\'e}, Vialard, Amari, Trouv{\'e},
  and Peyr{\'e}]{feydy}
J.~Feydy, T.~S{\'e}journ{\'e}, F.-X. Vialard, S.-i. Amari, A.~Trouv{\'e}, and
  G.~Peyr{\'e}.
\newblock Interpolating between optimal transport and mmd using sinkhorn
  divergences.
\newblock In \emph{The 22nd International Conference on Artificial Intelligence
  and Statistics}, pages 2681--2690. PMLR, 2019.

\bibitem[Figalli and Gigli(2010)]{Figalli2010}
A.~Figalli and N.~Gigli.
\newblock A new transportation distance between non-negative measures, with
  applications to gradients flows with dirichlet boundary conditions.
\newblock \emph{Journal des Mathematiques Pures et Appliquees}, 94:\penalty0
  107--130, 8 2010.
\newblock ISSN 00217824.
\newblock \doi{10.1016/j.matpur.2009.11.005}.

\bibitem[Gabrielsson et~al.(2020)Gabrielsson, Nelson, Dwaraknath, and
  Skraba]{gabrielsson2020layer}
R.~B. Gabrielsson, B.~J. Nelson, A.~Dwaraknath, and P.~Skraba.
\newblock A topology layer for machine learning.
\newblock In S.~Chiappa and R.~Calandra, editors, \emph{Proceedings of the
  Twenty Third International Conference on Artificial Intelligence and
  Statistics}, volume 108 of \emph{Proceedings of Machine Learning Research},
  pages 1553--1563. PMLR, 26--28 Aug 2020.
\newblock URL \url{https://proceedings.mlr.press/v108/gabrielsson20a.html}.

\bibitem[Gameiro et~al.(2016)Gameiro, Hiraoka, and Obayashi]{continuation}
M.~Gameiro, Y.~Hiraoka, and I.~Obayashi.
\newblock Continuation of point clouds via persistence diagrams.
\newblock \emph{Physica D: Nonlinear Phenomena}, 334:\penalty0 118--132, 2016.
\newblock ISSN 0167-2789.
\newblock \doi{https://doi.org/10.1016/j.physd.2015.11.011}.
\newblock URL
  \url{https://www.sciencedirect.com/science/article/pii/S0167278915002626}.
\newblock Topology in Dynamics, Differential Equations, and Data.

\bibitem[Genevay et~al.(2018)Genevay, Peyré, and Cuturi]{genevay2018learning}
A.~Genevay, G.~Peyré, and M.~Cuturi.
\newblock Learning generative models with sinkhorn divergences.
\newblock In \emph{International Conference on Artificial Intelligence and
  Statistics}, pages 1608--1617. PMLR, 2018.

\bibitem[Grover and Leskovec(2016)]{node2vec}
A.~Grover and J.~Leskovec.
\newblock node2vec: Scalable feature learning for networks.
\newblock In \emph{Proceedings of the 22nd ACM SIGKDD international conference
  on Knowledge discovery and data mining}, pages 855--864, 2016.

\bibitem[Hensel et~al.(2021)Hensel, Moor, and Rieck]{survey}
F.~Hensel, M.~Moor, and B.~Rieck.
\newblock A survey of topological machine learning methods.
\newblock \emph{Frontiers in Artificial Intelligence}, 4, 2021.
\newblock ISSN 2624-8212.
\newblock \doi{10.3389/frai.2021.681108}.
\newblock URL
  \url{https://www.frontiersin.org/article/10.3389/frai.2021.681108}.

\bibitem[Hu et~al.(2019)Hu, Li, Samaras, and Chen]{hu2019topologyPreserving}
X.~Hu, F.~Li, D.~Samaras, and C.~Chen.
\newblock Topology-preserving deep image segmentation.
\newblock \emph{Advances in neural information processing systems}, 32, 2019.

\bibitem[Janati et~al.(2020)Janati, Cuturi, and Gramfort]{janati2020debiased}
H.~Janati, M.~Cuturi, and A.~Gramfort.
\newblock Debiased sinkhorn barycenters.
\newblock In \emph{International Conference on Machine Learning}, pages
  4692--4701. PMLR, 2020.

\bibitem[Kantorovich(1942)]{kantorovich2006translocation}
L.~V. Kantorovich.
\newblock On the translocation of masses.
\newblock \emph{Journal of mathematical sciences}, 133\penalty0 (4):\penalty0
  1381--1382, 1942.

\bibitem[Lacombe(2023)]{lacombe2023hurot}
T.~Lacombe.
\newblock An homogeneous unbalanced regularized optimal transport model with
  applications to optimal transport with boundary.
\newblock In \emph{International Conference on Artificial Intelligence and
  Statistics}, pages 7311--7330. PMLR, 2023.

\bibitem[Lacombe et~al.(2018)Lacombe, Cuturi, and Oudot]{largescale}
T.~Lacombe, M.~Cuturi, and S.~Oudot.
\newblock Large scale computation of means and clusters for persistence
  diagrams using optimal transport.
\newblock In S.~Bengio, H.~Wallach, H.~Larochelle, K.~Grauman, N.~Cesa-Bianchi,
  and R.~Garnett, editors, \emph{Advances in Neural Information Processing
  Systems}, volume~31, \phantom{}, 2018. Curran Associates, Inc.
\newblock URL
  \url{https://proceedings.neurips.cc/paper/2018/file/b58f7d184743106a8a66028b7a28937c-Paper.pdf}.

\bibitem[Leygonie et~al.(2021)Leygonie, Oudot, and
  Tillmann]{leygonie2021framework}
J.~Leygonie, S.~Oudot, and U.~Tillmann.
\newblock A framework for differential calculus on persistence barcodes.
\newblock \emph{Foundations of Computational Mathematics}, pages 1--63, 2021.

\bibitem[Leygonie et~al.(2023)Leygonie, Carri{\`e}re, Lacombe, and
  Oudot]{leygonie2021gradient}
J.~Leygonie, M.~Carri{\`e}re, T.~Lacombe, and S.~Oudot.
\newblock A gradient sampling algorithm for stratified maps with applications
  to topological data analysis.
\newblock \emph{Mathematical Programming}, pages 1--41, 2023.

\bibitem[Maria et~al.(2014)Maria, Boissonnat, Glisse, and
  Yvinec]{maria2014gudhi}
C.~Maria, J.-D. Boissonnat, M.~Glisse, and M.~Yvinec.
\newblock The gudhi library: Simplicial complexes and persistent homology.
\newblock In \emph{Mathematical Software--ICMS 2014: 4th International
  Congress, Seoul, South Korea, August 5-9, 2014. Proceedings 4}, pages
  167--174. Springer, 2014.

\bibitem[McCann(1997)]{mccann1997convexity}
R.~J. McCann.
\newblock A convexity principle for interacting gases.
\newblock \emph{Advances in mathematics}, 128\penalty0 (1):\penalty0 153--179,
  1997.

\bibitem[Mikolov et~al.(2013)Mikolov, Chen, Corrado, and Dean]{word2vec}
T.~Mikolov, K.~Chen, G.~Corrado, and J.~Dean.
\newblock Efficient estimation of word representations in vector space, 2013.
\newblock URL \url{https://arxiv.org/abs/1301.3781}.

\bibitem[Mileyko et~al.(2011)Mileyko, Mukherjee, and
  Harer]{mileyko2011probability}
Y.~Mileyko, S.~Mukherjee, and J.~Harer.
\newblock Probability measures on the space of persistence diagrams.
\newblock \emph{Inverse Problems}, 27\penalty0 (12):\penalty0 124007, 2011.

\bibitem[Monge(1781)]{monge1781memoire}
G.~Monge.
\newblock M{\'e}moire sur la th{\'e}orie des d{\'e}blais et des remblais.
\newblock \emph{Mem. Math. Phys. Acad. Royale Sci.}, pages 666--704, 1781.

\bibitem[Moor et~al.(2020)Moor, Horn, Rieck, and Borgwardt]{auto}
M.~Moor, M.~Horn, B.~Rieck, and K.~Borgwardt.
\newblock Topological autoencoders.
\newblock In H.~D. III and A.~Singh, editors, \emph{Proceedings of the 37th
  International Conference on Machine Learning}, volume 119 of
  \emph{Proceedings of Machine Learning Research}, pages 7045--7054. PMLR,
  13--18 Jul 2020.
\newblock URL \url{https://proceedings.mlr.press/v119/moor20a.html}.

\bibitem[Nigmetov and Morozov(2022)]{nigmetov2022topological}
A.~Nigmetov and D.~Morozov.
\newblock Topological optimization with big steps.
\newblock \emph{arXiv preprint arXiv:2203.16748}, 2022.

\bibitem[Ou et~al.(2016)Ou, Cui, Pei, Zhang, and Zhu]{atpge}
M.~Ou, P.~Cui, J.~Pei, Z.~Zhang, and W.~Zhu.
\newblock Asymmetric transitivity preserving graph embedding.
\newblock In \emph{Proceedings of the 22nd ACM SIGKDD International Conference
  on Knowledge Discovery and Data Mining}, KDD '16, page 1105–1114, New York,
  NY, USA, 2016. Association for Computing Machinery.
\newblock ISBN 9781450342322.
\newblock \doi{10.1145/2939672.2939751}.
\newblock URL \url{https://doi.org/10.1145/2939672.2939751}.

\bibitem[Perozzi et~al.(2014)Perozzi, Al-Rfou, and Skiena]{deepwalk}
B.~Perozzi, R.~Al-Rfou, and S.~Skiena.
\newblock Deepwalk: Online learning of social representations.
\newblock In \emph{Proceedings of the 20th ACM SIGKDD International Conference
  on Knowledge Discovery and Data Mining}, KDD '14, page 701–710, New York,
  NY, USA, 2014. Association for Computing Machinery.
\newblock ISBN 9781450329569.
\newblock \doi{10.1145/2623330.2623732}.
\newblock URL \url{https://doi.org/10.1145/2623330.2623732}.

\bibitem[Peyré et~al.(2019)Peyré, Cuturi, et~al.]{peyre2019computational}
G.~Peyré, M.~Cuturi, et~al.
\newblock Computational optimal transport: With applications to data science.
\newblock \emph{Foundations and Trends{\textregistered} in Machine Learning},
  11\penalty0 (5-6):\penalty0 355--607, 2019.

\bibitem[Pham et~al.(2020)Pham, Le, Ho, Pham, and Bui]{pham2020unbalanced}
K.~Pham, K.~Le, N.~Ho, T.~Pham, and H.~Bui.
\newblock On unbalanced optimal transport: An analysis of sinkhorn algorithm.
\newblock In \emph{International Conference on Machine Learning}, pages
  7673--7682. PMLR, 2020.

\bibitem[Poulenard et~al.(2018)Poulenard, Skraba, and
  Ovsjanikov]{poulenard2018topological}
A.~Poulenard, P.~Skraba, and M.~Ovsjanikov.
\newblock Topological function optimization for continuous shape matching.
\newblock In \emph{Computer Graphics Forum}, volume~37, pages 13--25. Wiley
  Online Library, 2018.

\bibitem[Ramdas et~al.(2017)Ramdas, Garcia~Trillos, and
  Cuturi]{ramdas2017wasserstein}
A.~Ramdas, N.~Garcia~Trillos, and M.~Cuturi.
\newblock On wasserstein two-sample testing and related families of
  nonparametric tests.
\newblock \emph{Entropy}, 19\penalty0 (2):\penalty0 47, 2017.

\bibitem[Santambrogio(2015)]{santambrogio2015optimal}
F.~Santambrogio.
\newblock Optimal transport for applied mathematicians.
\newblock \emph{Birk{\"a}user, NY}, 55\penalty0 (58-63):\penalty0 94, 2015.

\bibitem[Schr{\"o}dinger(1932)]{schrodinger1932theorie}
E.~Schr{\"o}dinger.
\newblock Sur la th{\'e}orie relativiste de l'{\'e}lectron et
  l'interpr{\'e}tation de la m{\'e}canique quantique.
\newblock In \emph{Annales de l'institut Henri Poincar{\'e}}, volume~2, pages
  269--310, 1932.

\bibitem[Sinkhorn(1964)]{sinkhorn1964relationship}
R.~Sinkhorn.
\newblock A relationship between arbitrary positive matrices and doubly
  stochastic matrices.
\newblock \emph{The annals of mathematical statistics}, 35\penalty0
  (2):\penalty0 876--879, 1964.

\bibitem[Solomon et~al.(2015)Solomon, De~Goes, Peyr{\'e}, Cuturi, Butscher,
  Nguyen, Du, and Guibas]{solomon2015convolutional}
J.~Solomon, F.~De~Goes, G.~Peyr{\'e}, M.~Cuturi, A.~Butscher, A.~Nguyen, T.~Du,
  and L.~Guibas.
\newblock Convolutional wasserstein distances: Efficient optimal transportation
  on geometric domains.
\newblock \emph{ACM Transactions on Graphics (ToG)}, 34\penalty0 (4):\penalty0
  1--11, 2015.

\bibitem[Séjourné et~al.(2019)Séjourné, Feydy, Vialard, Trouvé, and
  Peyré]{sejourne2019sinkhorn}
T.~Séjourné, J.~Feydy, F.-X. Vialard, A.~Trouvé, and G.~Peyré.
\newblock Sinkhorn divergences for unbalanced optimal transport.
\newblock \emph{arXiv preprint arXiv:1910.12958}, 2019.

\bibitem[Tang et~al.(2015)Tang, Qu, Wang, Zhang, Yan, and Mei]{LINE}
J.~Tang, M.~Qu, M.~Wang, M.~Zhang, J.~Yan, and Q.~Mei.
\newblock {LINE}.
\newblock In \emph{Proceedings of the 24th International Conference on World
  Wide Web}. International World Wide Web Conferences Steering Committee, may
  2015.
\newblock \doi{10.1145/2736277.2741093}.
\newblock URL \url{https://doi.org/10.1145%2F2736277.2741093}.

\bibitem[Torgerson(1952)]{mds}
W.~S. Torgerson.
\newblock Multidimensional scaling: I. theory and method.
\newblock \emph{Psychometrika}, 17:\penalty0 401--419, 1952.

\bibitem[Turner et~al.(2014)Turner, Mileyko, Mukherjee, and
  Harer]{turner2014frechet}
K.~Turner, Y.~Mileyko, S.~Mukherjee, and J.~Harer.
\newblock Fréchet means for distributions of persistence diagrams.
\newblock \emph{Discrete \& Computational Geometry}, 52:\penalty0 44--70, 2014.

\bibitem[Ulahannan et~al.(2019)Ulahannan, Pendleton, Deshpande, Schwenk, Behr,
  Dai, Tyer, Rughani, Kudman, Adney, et~al.]{ulahannan2019nanopore}
N.~Ulahannan, M.~Pendleton, A.~Deshpande, S.~Schwenk, J.~M. Behr, X.~Dai,
  C.~Tyer, P.~Rughani, S.~Kudman, E.~Adney, et~al.
\newblock Nanopore sequencing of dna concatemers reveals higher-order features
  of chromatin structure.
\newblock \emph{BioRxiv}, page 833590, 2019.

\bibitem[Villani(2009)]{villani2009optimal}
C.~Villani.
\newblock \emph{Optimal transport: old and new}, volume 338.
\newblock Springer, 2009.

\bibitem[Xu(2021)]{xu2021understanding}
M.~Xu.
\newblock Understanding graph embedding methods and their applications.
\newblock \emph{SIAM Review}, 63\penalty0 (4):\penalty0 825--853, 2021.

\bibitem[Yim and Leygonie(2021)]{yim2021optimization}
K.~M. Yim and J.~Leygonie.
\newblock Optimization of spectral wavelets for persistence-based graph
  classification.
\newblock \emph{Frontiers in Applied Mathematics and Statistics}, 7:\penalty0
  651467, 2021.

\bibitem[Zhang and Ma(2020)]{matcha}
R.~Zhang and J.~Ma.
\newblock Probing multi-way chromatin interaction with hypergraph
  representation learning.
\newblock In R.~Schwartz, editor, \emph{Research in Computational Molecular
  Biology}, pages 276--277, Cham, 2020. Springer International Publishing.
\newblock ISBN 978-3-030-45257-5.

\bibitem[Zhang et~al.(2020{\natexlab{a}})Zhang, Zou, and Ma]{hypersagnn}
R.~Zhang, Y.~Zou, and J.~Ma.
\newblock Hyper-sagnn: a self-attention based graph neural network for
  hypergraphs.
\newblock In \emph{International Conference on Learning Representations
  (ICLR)}, 2020{\natexlab{a}}.

\bibitem[Zhang et~al.(2020{\natexlab{b}})Zhang, Xiao, and Wang]{zhang2020gpu}
S.~Zhang, M.~Xiao, and H.~Wang.
\newblock Gpu-accelerated computation of vietoris-rips persistence barcodes.
\newblock In \emph{36th International Symposium on Computational Geometry (SoCG
  2020)}. Schloss Dagstuhl-Leibniz-Zentrum f{\"u}r Informatik,
  2020{\natexlab{b}}.

\bibitem[Zomorodian and Carlsson(2005)]{carlsson_zomorodian}
A.~Zomorodian and G.~Carlsson.
\newblock Computing persistent homology.
\newblock \emph{Discrete and Computational Geometry}, 33:\penalty0 249--274, 02
  2005.
\newblock \doi{10.1007/s00454-004-1146-y}.

\end{thebibliography}


\newpage
\appendix

\section{Delayed Proofs}

\begin{proof}[Proof of \cref{prop_ce_gradient}] Let $v=v_i$, $1\leq j\leq m$, and $1\leq l\leq n$.
\begin{align*}
&\phantom{=}\frac{\partial H(T_v,C_v)}{\partial W_1(l,j)}=-\frac{\partial\sum_{1\leq a\leq n}T_v(a)u_v(a)}{\partial W_1(l,j)}+\frac{\partial\log\sum_{1\leq a\leq n} e^{u_v(a)}}{\partial W_1(l,j)}\\
&=-\sum_{1\leq a\leq n}T_v(a)\sum_{1\leq b\leq m}\dfrac{\partial W_1(i,b)W_2(b,a)}{\partial W_1(l,j)}+\sum_{1\leq a\leq n} e^{u_v(a)}\frac{\partial u_v(a)}{\partial W_1(l,j)}\cdot\dfrac{1}{\sum_{1\leq a\leq n} e^{u_v(a)}}\\
&=-\delta_{il}\sum_{1\leq a\leq n}(T_v(a)W_2(j,a)+W_2(j,a)C_v(a))\\
&
\\
&=\delta_{il}\sum_{1\leq a\leq n}W_2(j,a)(C_v(a)-T_v(a)).
\end{align*}
Next,
\begin{align*}
&\phantom{=}\frac{\partial H(T_v,C_v)}{\partial W_2(j,l)}=\frac{\partial(-\sum_{1\leq a\leq n}T_v(a)u_v(a) + \log\sum_{1\leq a\leq n} e^{u_v(a)})}{\partial W_2(j,l)}\\
&=-\frac{\partial\sum_{1\leq a\leq n}T_v(a)u_v(a)}{\partial W_2(j,l)}+\frac{\partial\log\sum_{1\leq a\leq n} e^{u_v(a)}}{\partial W_2(j,l)}\\
&=-\sum_{1\leq a\leq n}T_v(a)\frac{\partial u_v(a)}{\partial W_2(j,l)}+\sum_{1\leq a\leq n} \frac{\partial e^{u_v(a)}}{\partial W_2(j,l)}\cdot\dfrac{1}{\sum_{1\leq a\leq n} e^{u_v(a)}}\\
&=-\sum_{1\leq a\leq n}T_v(a)\sum_{1\leq b\leq m}\dfrac{\partial W_1(i,b)W_2(b,a)}{\partial W_2(j,l)}+\sum_{1\leq a\leq n} e^{u_v(a)}\frac{\partial u_v(a)}{\partial W_2(j,l)}\cdot\dfrac{1}{\sum_{1\leq a\leq n} e^{u_v(a)}}\\
&=-T_v(l)W_1(i,j)+W_1(i,j)\cdot\dfrac{e^{u_v(l)}}{\sum_{1\leq a\leq n} e^{u_v(a)}}\\
&=W_1(i,j)(C_v(l)-T_v(l)).
\end{align*}
That is, the gradient of $L_0(T_v,C_v)$ with respect to $W_1$ is non-zero only when taking partials with respect to the $i^\text{th}$ row of $W_1$ (recall $v=v_i$).
\end{proof}

\end{document}